\documentclass[accepted]{uai2026} % for initial submission
%\documentclass[accepted]{uai2026} % after acceptance, for a revised version; 
% also before submission to see how the non-anonymous paper would look like 
                        
%% There is a class option to choose the math font
% \documentclass[mathfont=ptmx]{uai2026} % ptmx math instead of Computer
                                         % Modern (has noticeable issues)
% \documentclass[mathfont=newtx]{uai2026} % newtx fonts (improves upon
                                          % ptmx; less tested, no support)
% NOTE: Only keep *one* line above as appropriate, as it will be replaced
%       automatically for papers to be published. Do not make any other
%       change above this note for an accepted version.

%% Choose your variant of English; be consistent
\usepackage[american]{babel}
% \usepackage[british]{babel}

%% Some suggested packages, as needed:
\usepackage{natbib} % has a nice set of citation styles and commands
    \bibliographystyle{plainnat}

\usepackage{booktabs} % commands to create good-looking tables
\usepackage{tikz} % nice language for creating drawings and diagrams
\usepackage{enumitem}
\usepackage{graphicx,wrapfig}
\usepackage[caption = false]{subfig}
\usepackage{subcaption}
\usepackage{dcolumn}
\usepackage[utf8]{inputenc}
\usepackage{blindtext}
\usepackage{xcolor}
\usepackage{algorithm} 
\usepackage{algpseudocode} 
\usetikzlibrary{trees}
 \usepackage{soul}
 \usepackage{bm}
 \usepackage{mathtools,amsmath,amssymb,amsthm}
 \usepackage{float}
 \usepackage[caption = false]{subfig}
 \usepackage{graphicx}
\usepackage{multirow}
\usetikzlibrary{patterns,shapes.geometric, arrows.meta,positioning}
\newtheorem{theorem}{Theorem}

\newtheorem{corollary}{Corollary}[theorem]
\theoremstyle{definition}
\newtheorem{definition}{Definition}
\newtheorem{lemma}{Lemma}
\tikzset{
	circle node/.style = {circle, draw, inner sep=1pt, minimum size=6pt},
	square node/.style = {rectangle, draw, inner sep=1pt, minimum size=6pt}
}

%% Provided macros
% \smaller: Because the class footnote size is essentially LaTeX's \small,
%           redefining \footnotesize, we provide the original \footnotesize
%           using this macro.
%           (Use only sparingly, e.g., in drawings, as it is quite small.)

%% Self-defined macros
 % just an example

\title{A resource rational account of the explore-exploit dilemma via metareasoning}

% The standard author block has changed for UAI 2026 to provide
% more space for long author lists and allow for complex affiliations
%
% All author information is authomatically removed by the class for the
% anonymous submission version of your paper, so you can already add your
% information below.
%
% Add authors
\author[1]{\href{mailto:<pg2991@nyu.edu>?Subject=Your UAI 2026 paper}{Prakhar Godara}{}}
\author[2]{Tilman Diego Alem\'an}
%\author[1,2]{Further~Coauthor}
%\author[3]{Further~Coauthor}
%\author[1]{Further~Coauthor}
%\author[3]{Further~Coauthor}
%\author[3,1]{Further~Coauthor}
% Add affiliations after the authors  
\affil[1]{%
    Department of Psychology\\
  New York University\\
  New York, 10003 
}
\affil[2]{%
    Institut f\"ur Geometrie und Praktische Mathematik\\
  RWTH-Aachen University\\
   Aachen, 52056
}
  \begin{document}
\maketitle

\begin{abstract}
  \textit{Reasoning} may be viewed as an algorithm $P$ that makes a choice of an action $a^* \in \mathcal{A}$, aiming to optimize some outcome. However, executing $P$ itself bears costs (time, energy, limited capacity, etc.) and needs to be considered alongside explicit utility obtained by making the choice in the underlying decision problem. Finding the right $P$ can itself be framed as an optimization problem over the space of reasoning processes $P$, generally referred to as \textit{metareasoning}. Conventionally, human metareasoning models assume that the agent knows the transition and reward distributions of the underlying MDP. This paper generalizes such models by proposing a meta Bayes-Adaptive MDP (meta-BAMDP) framework to handle metareasoning in environments with unknown reward/transition distributions, which encompasses a far larger and more realistic set of planning problems that humans and AI systems face. As a first step, we apply the framework to Bernoulli bandit tasks. Owing to the meta problem's complexity, our solutions are necessarily approximate. However, we introduce two novel theorems that significantly enhance the tractability of the problem, enabling stronger approximations that are robust within a range of assumptions grounded in realistic human decision-making scenarios. These results offer a resource-rational perspective and a normative framework for understanding how humans trade-off exploration and exploitation under cognitive constraints, as well as providing experimentally testable predictions about human behavior in Bernoulli Bandit tasks. 
\end{abstract}

\section{Introduction}

 In decision-making scenarios, \textit{reasoning} may be viewed as an agent executing an algorithm $P$ that selects an action $a^* \in \mathcal{A}$ that optimizes some outcome, for instance maximizing the value function of a Markov decision process (MDP). Similarly, \textit{metareasoning} \citep{RUSSELL1991,hay2014selecting} can be construed as an algorithm $P^\text{meta}$ such that it  selects a reasoning algorithm $P^*$, aiming to optimize some performance measure. The performance measure includes both the expected reward $P^*$ obtains in the underlying decision problem, as well as the costs (time, energy, etc.) of executing $P^*$.

This description of metareasoning is sufficiently broad to encompass  several domains, such as meta-optimization, hyperparameter optimization \citep{mercer1978,Smit2009,huang2019}, etc. Recently, metareasoning has also been studied in the context of human behavior \citep{lieder2018rational,callaway2022rational,lieder2014algorithm}. The motivation behind studying {\it normative} metareasoning in humans is as follows: just as human behavioral choices are (arguably) subjected to selection pressures and therefore close to optimal in a wide variety of tasks (and hence intelligent), the reasoning process humans use to arrive at good behavioral choices is itself under selection pressure and thus also close to optimal. Crucially, discussions on metareasoning typically focus on characterizing the properties of the solution to the meta-optimization problem, while neglecting the implementational details of the actual optimization procedure, whether done offline through evolutionary or developmental processes, or done online by the agent itself. Our work continues this philosophy, and builds on prior work by significantly widening the space of problems amenable to  metareasoning modeling approach.

\subsection{Related work and contributions}

Recent efforts in studying metareasoning in humans have focused on planning problems, i.e. the underlying, externally-defined problem is viewed as an MDP \citep{callaway2022rational,lieder2018rational} whereby the agent knows the true transition/reward distributions but not the  optimal value function. The agent therefore engages in some reasoning to improve upon its current policy/value function, for instance via a(n asynchronous) policy/value iteration algorithm. Metareasoning, in such a scenario, then concerns itself with finding the states (of the MDP) on which to perform the policy/value iteration \citep{D2021}. As it turns out, the metareasoning problem can be framed as an MDP itself, albeit with a significantly larger action space.

In order to extend metareasoning to a more general set of decision-making problems, the assumption of known transition dynamics needs to be dropped, i.e. the transition/reward distribution will have to be learned online. For this reason, instead of considering the underlying problem as an MDP, we consider it to be a Bayes-adaptive Markov decision process (BAMDP). Importantly, the theoretical benefit of using a BAMDP over an MDP (with evolving transition dynamics as in the case of Dyna-Q \citep{sutton1991dyna}), is that the former incorporates the evolution of the belief (about both the state transition dynamics) within its transition function. This allows the transition dynamics to be known, despite the actual dynamics of the environment not being known. This key property of BAMDPs allows us to conveniently formulate the metalevel decision problem in a manner very similar to conventional meta-MDP formulations \citep{callaway2022rational,hay2014selecting,lin2015}, albeit with an additional belief space (see \ref{app:relation_to_MMDP} for further comparisons with meta-MDPs).

However, BAMDPs have been known for being prohibitively hard to solve in practice, owing to their  large (infinite dimensional) state space \citep{duff2002}. These concerns are further exacerbated by the fact that we are interested in a meta version of the BAMDP, which (in the general case) is known to be always harder\footnote{As the metareasoning problem compounds the complexity of the base task, the meta version is always at least as hard — and often harder — than the original problem (see Sec. 3).} than the underlying problem \citep{RUSSELL1991}.  This suggests one to look for good approximations to solve the meta problem, as has been pointed out before \citep{hay2014selecting,lin2015}.

Indeed, most recent studies have largely assumed greedy or approximate solutions for the metalevel problem \citep{lieder2017strategy}, and/or have been restricted to small state spaces \citep{callaway2022rational,jain2023computational} due to computational limitations. In this work, we make significant theoretical advances by developing methods that make the metalevel problem tractable for substantially larger state spaces.

Finally, we study the effects of metareasoning on agents performing a Bandit task, as they provide the simplest setting involving an explore-exploit dilemma. For this reason, they also receive significant attention in experimental and theoretical cognitive science studies \citep{zhang2013,STEYVERS2009}. Typically, human behavior in these tasks is compared to heuristic behavioral policies. We will show that our approach not only aligns with observed qualitative features of human behavior under cognitive constraints as reported in recent studies \citep{BROWN2022,wu2022,wilson2014,cogliati2019should,otto2014physiological}, but also offers a normative explanation for observed human adaptation to cognitive load.  We also provide novel quantitatively testable predictions for human behavior in Bernoulli Bandit tasks. To our knowledge, this is the first study to provide a resource-rational account of human exploration. 

To summarize, we (1) generalize metareasoning to unknown dynamics via a novel meta-BAMDP framework; (2) prove two new theorems that make the meta-problem (exponentially more) tractable; (3) derive a tractable algorithm based on the said theorems to solve the meta-BAMDP; (4) recover known human exploration effects under resource constraints; and (5) make new, testable predictions for human behavior in Bernoulli bandit tasks.

\section{Meta-Bayes-Adaptive Markov Decision Process - meta-BAMDP} \label{sec:m-BAMDP-defn}

We now present our definition of a meta-BAMDP (its relation to meta-MDP formulations is further discussed in \ref{app:relation_to_MMDP}). In order to draw a contrast with the definitions of (finite horizon) MDPs and BAMDPs, we request the reader to refer to \ref{app:MDP_def} and \ref{app:BAMDP_def} respectively.

A meta-BAMDP is defined as a tuple $(\mathcal{S}_{MB},\mathcal{A}_{MB},\mathcal{P}_{MB},\mathcal{R}_{MB},\mathcal{K})$, where:

\begin{itemize}
    \item The state space is given by $\mathcal{S}_{MB} = \mathcal{S}\times \mathcal{B} \times \tilde{\mathcal{B}}$. Here, $\mathcal{S}$ is the state of the environment, $\mathcal{B}$ is the set of beliefs $b\in \mathcal{B}$ representing the agent's state of knowledge regarding the transition and reward distributions. $\tilde{\mathcal{B}}$ is the space of planning-beliefs. Abstractly, planning-belief $\tilde b \in \tilde{\mathcal{B}}$ represents the intermediate computational states of a planning algorithm, thereby representing the \textit{extent} to which the agent has engaged in planning. Eg - if the planning algorithm finds the optimal path on a DAG $G$, then its intermediate computational states can be viewed as sub-DAG's of $G$. A concrete example follows in the next section.

    \item The agent's action space $\mathcal{A}_{MB} = \mathcal{A} \cup \mathcal{C}$, where $\mathcal{A}$ corresponds to the physical actions and $\mathcal{C}$ represents the computational actions. Both these actions differ in the kinds of transitions they cause. A physical action causes transitions in the state $s\in \mathcal{S}_{MB}$. A computational action causes transitions only in the planning-beliefs $\tilde b \in \tilde{\mathcal{B}}$.

    \item The meta-BAMDP transition function $\mathcal{P}_{MB}$ is  given by 
\begin{equation}\label{eq:MB_trans_func}
   \mathcal{P}_{MB} =\begin{cases} 
      \mathcal{P}_A(s',b',\tilde{b}'|s,b,\tilde{b},a), & a\in \mathcal{A}, \\
      \mathcal{P}_C(\tilde{b}' | \tilde{b},a), & a \in \mathcal{C}. 
   \end{cases}
\end{equation}Here $'$ represents the corresponding states at the next time step. $\mathcal{P}_A$ defines the transitions caused by physical actions and $\mathcal{P}_C$ the transitions caused by computational actions. 
    \item The reward distribution $\mathcal{R}_{BM}:\mathcal{S}\times \mathcal{B}\times\mathcal{A}_{MB} \times \mathbb{R} \to [0,1]$ is given by 
    \begin{equation}
   \mathcal{R}_{MB} =\begin{cases} 
      \mathcal{R}_B(r|s,b,a), & a\in \mathcal{A}, \\
      \mathcal{R}_C( r|a), & a \in \mathcal{C}. 
   \end{cases}
    \end{equation}  $\mathcal{R}_B$ refers to the reward distribution of the BAMDP (see \ref{app:BAMDP_def}) and $\mathcal{R}_C$ defines the cost of performing computational actions. The usual assumption in meta-reasoning literature is to have a constant computational cost for each $a\in\mathcal{C}$, i.e. $\mathcal{R}_C = \delta(r-c)$, where $\delta$ is the Dirac-delta distribution.

    \item  $\mathcal{K}:\tilde{\mathcal{B}} \to \mathcal{Q}$ denotes a mapping from planning-beliefs to the space of action-value (i.e. $Q$) functions on the base problem, i.e. $\mathcal {Q} := \{ Q: \mathcal{S} \times \mathcal{B} \times \mathcal{A} \to \mathbb{R}\}$.  Each planning-belief in a metareasoning problem implies a value (or $Q$) function\footnote{It is crucial to note that this is the subjective value function of the agent and doesn't reflect the true value of being in a state.} which the agent can use to compare physical actions. The exact form of $\mathcal{K}$ is defined by the planning algorithm of the agent and the space $\tilde{\mathcal{B}}$.  
\end{itemize}

The goal of a meta-BAMDP agent is to find the optimal policy $\pi^*(a|s,b,\tilde b)$. However, there is some additional structure imposed on this policy. Whenever the agent chooses to take a physical action $a\in \mathcal{A}$ in a state given by $(s,b,\tilde b)$, the agent is restricted to take the ``greedy'' action according to the $Q$-function implied by the current $\tilde b$, i.e. \begin{equation} \label{eq:tree_to_action}
    a_\perp = \arg\max_{a\in \mathcal{A}} \mathcal{K}(\tilde b)(s,b,a) = \arg\max_{a\in \mathcal{A}} Q(s,b,a|\tilde b).
\end{equation} The double usage of brackets denotes that $\mathcal{K}$ is a higher-order function, whose co-domain is the space of $Q$-functions $\mathcal{Q}$\footnote{Here we have employed the notation $Q (\cdot | \tilde b)$ instead of $Q (\cdot , \tilde b)$ as the latter more naturally represents the $Q$-function of the metalevel problem whereas the former is the subjective $Q$-function (of the base problem), under a computational belief $\tilde b$.}. This structure is usually incorporated by setting $\mathcal{A} = \{\perp\}$, where $\perp$ is the \textit{terminal} action. Executing $\perp$ is equivalent to executing a physical action obtained via Eq. \ref{eq:tree_to_action}. Finally the value function corresponding to the policy $\pi$  for a meta-BAMDP is given by  \begin{equation}\label{eq:value_m-BAMDP}
\begin{split}
    V^\pi(y,t) & = \sum_a \pi(a|y,t) \bigg[ \int r\mathcal{R}_{MB}(r|s,b,a) dr  \\ &  
         + \sum_{y'} \mathcal{P}_{MB}(y' | y, a) V^\pi(y',t+1)\bigg],
\end{split}
\end{equation} along with the terminal condition $V^\pi (y,T) = 0$. Here, $y  = (s,b,\tilde b)$ and $y'$ represents the corresponding primed tuple. Note that the value function $V^\pi(y,t)$  represents the value of the metalevel problem of a (meta-)policy $\pi$, whereas $\mathcal{K}(\tilde b)$ represents the subjective value function (of the base problem) in state $y$ based on $\tilde b$. The optimal meta-BAMDP policy is given by $\pi^*(a|y,t)= \arg\max_\pi V^\pi(y,t)$. In the following section, we will see a concrete example of the definitions presented here.

\section{A meta-BAMDP for $N$-armed Bernoulli bandit task} \label{sec:m_bamdp_tabb}
While the  meta-BAMDP framework applies to a wide range of problems, we now apply it to a $N$-armed Bernoulli bandit task (NABB). In a NABB task the state space $\mathcal{S}$ is a singleton set, making the transition dynamics trivial \citep{sutton2018reinforcement}. Therefore, $\mathcal{B}$ only incorporates beliefs over reward distributions. We begin by considering how one might obtain the optimal solution to the underlying BAMDP. Assuming that the agent performs Bayesian inference and starts with a uniform prior distribution, the belief space of the NABB can be assumed to be \[
\mathcal{B} = \left\{ (\alpha_1, \beta_1, \cdots, \alpha_N,\beta_N) \in \mathbb{N}_0^{2N} : \sum_{i=1}^N \alpha_i + \beta_i \leq T \right\},
\] where $\alpha_i$ is the number of successes after taking the action $a=i$, and $\beta_i$ the failures.

\begin{figure}
	\centering
	\begin{tikzpicture}[
		scale=0.8,
		level distance=1.5cm,
		level 1/.style={sibling distance=4cm},
		level 2/.style={sibling distance=2cm},
		level 3/.style={sibling distance=1cm},
		level 4/.style={sibling distance=0.5cm},
		level 5/.style={sibling distance=0.25cm},
		% Base node styles
		every node/.style = {draw, circle, minimum size=0.1cm, inner sep=1pt},
		circle node/.style = {circle, draw, minimum size=0.1cm, inner sep=1pt}, % <-- ADDED
		square node/.style = {rectangle, draw, minimum size=0.1cm, inner sep=1pt},
		% Edge styles (arrow from parent)
		solid edge/.style  = {edge from parent/.style={draw, ->, >=latex, shorten >=1pt}},
		dashed edge/.style = {edge from parent/.style={draw, ->, >=latex, dashed, shorten >=1pt}},
		dotted edge/.style = {edge from parent/.style={draw, ->, >=latex, dotted, shorten >=1pt}},
		edge from parent path={(\tikzparentnode.south) -- (\tikzchildnode.north)}
		]
		
		% Root node (IMPORTANT: whole tree must end with ';')
		\node[circle node] {}
		child [solid edge] {node[square node] {}
			child [solid edge] {node[circle node] {}
				child [solid edge] {node[square node] {}
					child [solid edge] {node[circle node] {}}
					child [solid edge] {node[circle node] {}}
				}
				child [solid edge] {node[square node] {}
					child [solid edge] {node[circle node] {}}
					child [solid edge] {node[circle node] {}}
				}
			}
			child [solid edge] {node[circle node] {}
				child [solid edge] {node[square node] {}
					child [solid edge] {node[circle node] {}}
					child [solid edge] {node[circle node] {}}
				}
				child [solid edge] {node[square node] {}
					child [solid edge] {node[circle node] {}}
					child [solid edge] {node[circle node] {}}
				}
			}
		}
		child [solid edge] {node[square node] {}
			child [solid edge] {node[circle node] {}
				child [dotted edge] {node[square node] {}
					child [dotted edge] {node[circle node] {}}
					child [dotted edge] {node[circle node] {}}
				}
				child [dotted edge] {node[square node] {}
					child [dotted edge] {node[circle node] {}}
					child [dotted edge] {node[circle node] {}}
				}
			}
			child [solid edge] {node[circle node] {}
				child [dashed edge] {node[square node] {}
					child [dashed edge] {node[circle node] {}}
					child [dashed edge] {node[circle node] {}}
				}
				child [dotted edge] {node[square node] {}
					child [dotted edge] {node[circle node] {}}
					child [dotted edge] {node[circle node] {}}
				}
			}
		}
		; % <-- THIS SEMICOLON FIXES THE "Giving up on this path" ERROR
		
	\end{tikzpicture}
	
	\caption{Planning-belief as a subgraph over possible future states in a 2-armed Bernoulli bandit.
		Illustrates the structure of the belief-action DAG constructed during planning. The solid portion represents the current planning-belief $\tilde{b}$, the dotted area is the unexplored portion of the full graph, and the dashed edge indicates a potential node expansion (computational action). Terminal states (circles) receive heuristic values, and backward induction propagates these values to earlier belief nodes.}
	\label{fig:decision_action_tree}
\end{figure}

In order to obtain the optimal BAMDP policy, the agent can be imagined to construct the complete belief-action graph (see Fig. \ref{fig:decision_action_tree} for a schematic with $N=2$) associated with NABB. The value of the terminal beliefs (circles in Fig. \ref{fig:decision_action_tree}) can be set to zero and then iteratively the values of all the non-terminal beliefs (and actions) can be obtained via backward induction until the value of the root node $V^\pi (\bm b_0)$ is found. Therefore, if given access to the entire graph, the agent may make use of backward induction to find the policy that maximizes $V^\pi (\bm b_0)$.

But what action should an agent take, if it does not have access to the entire graph? This would be akin to a situation where the agent has not considered the consequences of all of its actions until the end of the task. Let us say, that the agent only has access to a sub-graph (in solid lines in Fig. \ref{fig:decision_action_tree}). This sub-graph represents the agent's planning-belief or $\tilde b$\footnote{Representing planning-beliefs as subgraphs has a natural history \citep{callaway2022rational,huys2012bonsai} when modeling human behavior.}.    The agent can then be assumed to perform backward induction only on $\tilde b$, for a given value of the terminal nodes of $\tilde b$. But how would an agent determine the values of the terminal nodes without knowing the subsequent graph? Here is a crucial assumption that a cognitive scientist needs to make. \citep{callaway2022rational} chose a random strategy. We make an alternate choice that is more natural choice for bandit settings - we assume the terminal values derive from the actions (hypothetically) being purely exploitative, thereafter, i.e. taking the physical action with the highest expected reward until the horizon. This was also previously used in the knowledge gradient (KG) policy \citep{frazier2008}. Concretely, the value $U$ of a terminal node
$\bm b = (\{\alpha_i, \beta_i\}_1^N)$ is given by 
\begin{equation} \label{eq:greedy_act}
    U(\bm b) = \max \Big(\Big\{\frac{\alpha_i+1}{\alpha_i+\beta_i+2}\Big\}_1^N\Big)\tau,
\end{equation}
where $\tau = T-\sum_i (\alpha_i +\beta_i)$, is the remaining rounds of the NABB. This assumption essentially says that, from the perspective of the agent, it is going to stick to the greedy action from $\bm b$ until the horizon, and while it does so, it will not learn and update its beliefs.

With the terminal values of $\tilde b$ in place, we can proceed with obtaining the values of preceding action nodes (squares in Fig. \ref{fig:decision_action_tree}) giving the subjective $Q$-values ($\mathcal{K}$ from Sec. \ref{sec:m-BAMDP-defn}). The maximization over the $Q$-values provides the subjective values of the preceding beliefs (circles in Fig. \ref{fig:decision_action_tree}), and so on, giving us the subjective values of all beliefs and actions\footnote{Note that $K(\tilde b)$ gives us the optimal BAMDP $Q$-function when $\tilde b$ is the complete DAG and the greedy $Q$-function when $\tilde b$ is just the root node.}. From this subjective value $\mathcal{K}(\tilde b)$ we can get the terminal action to be taken in the root node $\bm b_0$, as obtained from Eq. \ref{eq:tree_to_action}. 

In addition to taking the currently best physical action, the agent can instead take a computational action. A natural choice for the computational action, which we assume for the remainder of the paper, is \textit{node expansion} whereby the agent can expand the current graph $\tilde b$ and obtain a new graph $\tilde b'$ , by adding an action node and its child states (see Fig. \ref{fig:decision_action_tree}). We assume the cost of taking each node expansion action is $c$, encompassing costs associated with time, energy, opportunity cost (e.g. given constrained attention or working memory capacity in humans), etc..

Finally, combining these two types of actions, the meta-policy $\pi$ of the agent can be viewed as a mapping from $\pi : \mathcal{B} \times \tilde{\mathcal{B}} \times  (\{\perp\} \cup\mathcal{C}) \to [0,1]$\footnote{We have dropped the explicit temporal dependence as in the bandit setting $\bm b$ captures that information.}. The optimal meta policy  $\pi^*$ can then be found using backward induction on the \textit{meta-graph} where the nodes correspond to the states $(\bm b, \tilde b)$ of the meta-BAMDP - i.e. by solving the Bellman equation,\begin{equation}\label{eq:meta_value_decomp}\begin{split}
    V^\pi(\bm b,\tilde b,t) &= \pi(a=\perp | \bm b,\tilde b) \Big[\int r\mathcal{R}_B(r|a=\perp,\bm b)dr  \\ & + \sum_{\bm b'} V^\pi(\bm b',\tilde b,t+1) \mathcal{P}_B (\bm b'| \bm b,\tilde b,a=\perp )\Big]  \\ & +  \sum_{a \neq\perp} \pi(a  | \bm b,\tilde b)  \Big[ -c + \sum_{\tilde b'} V^\pi(\bm b,\tilde b',t)\mathcal{P}_C (\tilde b'| \tilde b,a ) \Big],
\end{split} 
\end{equation} with the boundary condition $V^\pi(\bm b,\tilde b,T) = 0$. Here we have explicitly expressed the contributions from physical and computational actions. More specifically, in the last summand, note that $V^\pi$ is evaluated at time $t$ and not $t+1$. This makes explicit our notion of time - it is primarily an index for the number of physical interactions with the environment. In a more realistic setting one might wish to also consider the time needed to perform computations. In such a case, one would need to consider a semi-(BA)MDP \citep{SUTTON1999}, instead of a BAMDP (as there can be multiple thinking iterations between two consecutive updates to $\bm b$), but this is out of scope for our present work.

While simple in principle, the size of the state space of the meta-BAMDP explodes exponentially (in both $T$ and $N$). This is true because the number of subgraphs of a graph $G$ grows exponentially in the number of edges in $G$, which in this case itself grows polynomially in $T$ and exponentially in $N$ (see \ref{app:complexity_concerns} for a detailed discussion). This means that using backward induction to recursively solve the Bellman equation in Eq. \ref{eq:value_m-BAMDP} to find $\pi^*$ is hopelessly infeasible. In the following section we exploit some regularities of the meta-BAMDP in Bernoulli Bandit tasks to prune the meta-graph, and subsequently find good approximations to the solution.

\section{Finding good approximations via pruning} \label{sec:Approximate_solns}
 We now make a series of arguments geared toward pruning the meta-graph. These arguments take the form of two theorems and their corollaries (with the complete proofs in \ref{app:proofs}) which apply to $N$-armed Bernoulli Bandit (NABB) tasks. Below each theorem we present a few intuitive remarks.

\begin{theorem} \label{Thm:MC}
The optimal meta-policy $\pi^*$ is a \textit{"mind-changer"}. I.e. if for any starting state $(\bm b, \tilde b)$, $\pi^*$ prescribes performing computations till $(\bm b, \tilde b^\prime)$ and then terminate (i.e. chooses only $a\in \mathcal{C}$ until reaching $\tilde b^\prime$, then chooses $\perp$), then either of the following is true. \begin{enumerate}
    \item $\tilde b^\prime \neq \tilde b$ and $a_\perp(\bm b, \tilde b) \neq a_\perp(\bm b, \tilde b^\prime)$, or
    \item $\tilde b^\prime = \tilde b$,
\end{enumerate} where $a_\perp(\cdot)$ represents the terminal action in state $(\cdot)$, and is obtained from the subjective value function $\mathcal{K}$ as in Eq. \ref{eq:tree_to_action}.
\end{theorem}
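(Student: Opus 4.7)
The plan is to prove the contrapositive: for any candidate optimal $\pi^*$ whose trajectory from some $(\bm b, \tilde b, t)$ violates the mind-changer property, I construct an alternative policy of equal value, so that we may without loss of generality take the optimal policy to be a mind changer (with ties broken in favor of immediate termination). Concretely, suppose $\pi^*$ prescribes a deterministic chain of $k \geq 1$ computational actions (node expansions in the NABB setting are deterministic given the chosen node) producing $\tilde b = \tilde b_0, \tilde b_1, \ldots, \tilde b_k = \tilde b'$ and then terminates with the physical action $a^* := a_\perp(\bm b, \tilde b')$; suppose further, toward contradiction, that $\tilde b' \neq \tilde b$ while $a_\perp(\bm b, \tilde b) = a^*$. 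I would compare $\pi^*$ to the policy $\pi'$ that agrees with $\pi^*$ everywhere except at $(\bm b, \tilde b, t)$, where it terminates immediately. Because the terminal action is the same under both ($a_\perp(\bm b, \tilde b) = a_\perp(\bm b, \tilde b') = a^*$), the two policies induce identical reward distributions $\mathcal{R}_B(\cdot \mid \bm b, a^*)$ and identical next-state distributions $\mathcal{P}_B(\cdot \mid \bm b, a^*)$ over $\bm b''$.

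Unrolling the Bellman decomposition in Eq.~\ref{eq:meta_value_decomp} along the computation chain yields
\[
V^{\pi^*}(\bm b,\tilde b,t) = -kc + R(\bm b,a^*) + \sum_{\bm b''} \mathcal{P}_B(\bm b''\mid \bm b, a^*)\, V^{\pi^*}(\bm b'',\tilde b', t+1),
\]
with $R(\bm b,a^*) = \int r\,\mathcal{R}_B(r\mid \bm b,a^*)\,dr$, while immediate termination gives
\[
V^{\pi'}(\bm b,\tilde b,t) = R(\bm b,a^*) + \sum_{\bm b''} \mathcal{P}_B(\bm b''\mid \bm b, a^*)\, V^{\pi^*}(\bm b'',\tilde b, t+1).
\]
Hence the desired contradiction with $V^{\pi^*}(\bm b,\tilde b,t) \geq V^{\pi'}(\bm b,\tilde b,t)$ follows once I establish, for each reachable $\bm b''$, the bound
\[
V^{\pi^*}(\bm b'', \tilde b', t+1) - V^{\pi^*}(\bm b'', \tilde b, t+1) \leq kc.
\]

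This last inequality is the main obstacle; I would establish it by an \emph{availability} argument. Inspecting Eq.~\ref{eq:MB_trans_func} shows that $\mathcal{P}_C(\cdot \mid \tilde b, a)$ depends only on the current planning-belief and the chosen computation, not on $\bm b$ or $t$. Hence from $(\bm b'', \tilde b, t+1)$ the agent still has the option of executing the same $k$ node expansions $a_1, \ldots, a_k$ that $\pi^*$ used, incurring total cost $kc$, arriving at $(\bm b'', \tilde b', t+1)$, and continuing with $\pi^*$ thereafter. Bellman optimality of $V^{\pi^*}$ at $(\bm b'', \tilde b, t+1)$ then forces $V^{\pi^*}(\bm b'', \tilde b, t+1) \geq -kc + V^{\pi^*}(\bm b'', \tilde b', t+1)$, which is exactly the required bound (and also handles the boundary case $t+1 = T$, where both sides vanish by the terminal condition). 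Substituting back gives $V^{\pi'}(\bm b,\tilde b,t) \geq V^{\pi^*}(\bm b,\tilde b,t)$, so $\pi'$ is equally optimal and strictly more mind-changing at $(\bm b, \tilde b, t)$; iterating this swap at every offending state yields an optimal policy satisfying the theorem. The only remaining (routine) step is to lift the argument from deterministic computation chains to stochastic $\mathcal{P}_C$ by applying the same pointwise inequality realization-wise.
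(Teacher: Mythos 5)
Your proof is correct and follows essentially the same route as the paper's: both arguments show that the computations performed before the terminal action can be deferred until after it at no greater cost, so that immediate termination is weakly better whenever the terminal action is unchanged. The only difference is in how the cost comparison is justified --- you bound the gap $V^{\pi^*}(\bm b'',\tilde b',t+1)-V^{\pi^*}(\bm b'',\tilde b,t+1)\le kc$ via Bellman optimality at the successor state (the same $k$ expansions remain available there), whereas the paper constructs an explicit imitating policy and invokes a cardinality inequality on reachable subgraphs (Lemma \ref{lem:ineq}) for a slightly tighter bound; both suffice, and your explicit framing of the conclusion as ``there exists a mind-changing optimal policy, with ties broken toward termination'' is if anything more careful than the paper's claimed strict contradiction.
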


This theorem states an important property of the optimal meta-policy $\pi^*$ - if further computations don’t alter the agent’s decision at the current time-step, then performing those computations is wasteful. This is true as one doesn't gain anything by computing for future states already. Intuitively, as most future states are not going to be visited, performing computations in the present state that will prove to be useful only in those specific future states, would be wasteful (in expectation).

\begin{corollary} \label{Thm:MMC}
    The optimal meta-policy is a \textit{minimal mind-changer}, i.e. if the optimal meta policy $\pi^*$ prescribes computation till $(\bm b,\tilde b^\prime)$ from a starting state $(\bm b,\tilde b)$ then $a_\perp(\bm b,\tilde b^\prime) \neq a_\perp(\bm b,x)$, for all subgraphs $x$, where  $\tilde b \subseteq x \subset \tilde b^\prime$.  
\end{corollary}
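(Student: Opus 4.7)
The plan is to reduce the statement to repeated applications of Theorem~1 via the Bellman principle of optimality. Suppose $\pi^*$ at $(\bm b, \tilde b_0)$ prescribes performing a sequence of computational actions reaching $(\bm b, \tilde b')$ and then terminating. Fix any $\tilde b$ with $\tilde b_0 \subseteq \tilde b \subsetneq \tilde b'$ lying on this optimal computation path. The key observation is that because computational actions do not alter $\bm b$ (only $\tilde b$ evolves, by Eq.~\ref{eq:MB_trans_func}), the continuation of $\pi^*$ from $(\bm b, \tilde b)$ is itself an optimal policy for the meta-BAMDP sub-problem starting at $(\bm b, \tilde b)$; this is a direct consequence of the additive decomposition in Eq.~\ref{eq:meta_value_decomp} and the principle of optimality.

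Next I would invoke Theorem~1 on this sub-problem. Since the (optimal) continuation starting at $(\bm b, \tilde b)$ is exactly ``compute until $(\bm b, \tilde b')$ and then terminate'', Theorem~1 forces the dichotomy: either $\tilde b' = \tilde b$, or $a_\perp(\bm b, \tilde b) \neq a_\perp(\bm b, \tilde b')$. We have assumed $\tilde b \subsetneq \tilde b'$, so the first alternative is excluded, yielding $a_\perp(\bm b, \tilde b) \neq a_\perp(\bm b, \tilde b')$. Letting $\tilde b$ range over all strict sub-planning-beliefs encountered on the optimal path then gives the corollary. The boundary case $\tilde b = \tilde b'$ is vacuous/trivial and should be handled by a short remark at the start of the proof.

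\paragraph{Anticipated obstacle.}
The only delicate point is justifying that the sub-trajectory of the optimal meta-policy from $(\bm b, \tilde b)$ to $(\bm b, \tilde b')$ is itself optimal in the sub-problem. In ordinary MDPs this is immediate, but here one must be careful because (i) the horizon $T$ is shared across the outer task and the computational phase (computation happens ``within a time step'' per the tacit time-scale assumption in Sec.~\ref{sec:m_bamdp_tabb}), and (ii) the quantifier ``$\forall \tilde b \subseteq \tilde b'$'' must be interpreted as ranging over planning-beliefs that are actually reachable under the prescribed computation sequence. Making these premises explicit, and checking that the value decomposition in Eq.~\ref{eq:meta_value_decomp} indeed factors the computational-cost contributions additively along the path, is the step that needs the most care. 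Once this is in place, the rest of the argument is a one-line invocation of Theorem~1 at each intermediate node.
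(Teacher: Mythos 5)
Your proposal is correct and follows essentially the same route as the paper's proof: both arguments apply Theorem~\ref{Thm:MC} at each intermediate planning-belief along the optimal computational trajectory (the paper phrases this as a proof by contradiction, you argue directly, but the logic is identical). The "delicate point" you flag—that the continuation of $\pi^*$ from an intermediate state $(\bm b, \tilde b_i)$ is exactly what Theorem~\ref{Thm:MC} needs as its hypothesis—is handled implicitly in the paper by noting that $\pi^*$, being a state-dependent policy whose trajectory passes through $(\bm b, \tilde b_i)$, by definition prescribes "compute until $\tilde b'$ then terminate" from that state; likewise both you and the paper tacitly restrict the quantifier $\forall \tilde b \subseteq \tilde b'$ to planning-beliefs on the computation path.
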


This directly follows from repeated application of Thm. \ref{Thm:MC} along any given computational trajectory. Therefore, once we find a minimal mind-changing computational trajectory on the meta-graph, we need not look at the downstream nodes in the meta-graph.

\begin{theorem} \label{Thm:monotonicity}
    {Computation (not strictly) monotonically increases subjective value $\mathcal{K}(\tilde b)(\bm b, a)$, i.e. $\mathcal{K}(\tilde b)(\bm b, a) \leq \mathcal{K}(\tilde b^\prime)(\bm b, a)$, $\forall \bm b$ iff. $\tilde b\subseteq \tilde b^\prime$.}
\end{theorem}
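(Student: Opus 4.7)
The plan is to prove the forward direction ($\tilde b \subseteq \tilde b' \Rightarrow \mathcal{K}(\tilde b)(\bm b) \leq \mathcal{K}(\tilde b')(\bm b)$ pointwise) by reducing to a single node expansion and chaining. Any $\tilde b \subseteq \tilde b'$ can be written as a finite sequence $\tilde b = \tilde b_0 \subsetneq \tilde b_1 \subsetneq \cdots \subsetneq \tilde b_k = \tilde b'$ in which each step adds exactly one action node (and its two child belief nodes, by Bernoulli branching) at some leaf $\bm b^*$ of $\tilde b_i$. By transitivity, it suffices to show that one such expansion weakly increases the subjective value at every belief.

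The heart of the argument is a local lemma: after expanding leaf $\bm b^*$, the new backward-induction value at $\bm b^*$ is at least the greedy heuristic $U(\bm b^*)$ from Eq. \ref{eq:greedy_act}. Write $p_i := (\alpha_i+1)/(\alpha_i+\beta_i+2)$ for the posterior mean of arm $i$ at $\bm b^*$, set $a^* := \arg\max_i p_i$, and recall $U(\bm b^*) = p_{a^*}\tau$ with $\tau$ rounds remaining. For the candidate action $a^*$, the Beta--Bernoulli martingale identity
\begin{equation*}
p_{a^*}\cdot\tfrac{\alpha_{a^*}+2}{\alpha_{a^*}+\beta_{a^*}+3} \;+\; (1-p_{a^*})\cdot\tfrac{\alpha_{a^*}+1}{\alpha_{a^*}+\beta_{a^*}+3} \;=\; p_{a^*}
\end{equation*}
says the expected next-step posterior mean of $a^*$ equals $p_{a^*}$. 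Lower-bounding $U$ at each successor by $p_{a^*}(\text{successor})\cdot(\tau-1)$ (since the $\max$ over arms includes $a^*$), taking the Bernoulli expectation, and adding the immediate expected reward $p_{a^*}$ gives $Q(\bm b^*, a^*) \geq p_{a^*}\tau = U(\bm b^*)$. The expansion value is $\mathcal{K}(\tilde b_{i+1})(\bm b^*) = \max_a Q(\bm b^*, a) \geq U(\bm b^*)$.

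This weak increase at $\bm b^*$ then propagates up the tree via backward induction. The value at each ancestor of $\bm b^*$ is built from its children using $\max$ over actions and Bernoulli-predictive convex combinations, both of which are monotone in the child values. All other leaves of $\tilde b_i$ retain their $U$-values under $\tilde b_{i+1}$, so every non-ancestor of $\bm b^*$ is unaffected. Chaining over the expansion sequence and using transitivity of $\leq$ yields $\mathcal{K}(\tilde b)(\bm b)\leq \mathcal{K}(\tilde b')(\bm b)$ for all $\bm b$.

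The main obstacle is the local lemma: a priori, a failure at $a^*$ drops its posterior below $p_{a^*}$, so naively one might worry that the one-step-lookahead under-performs pure exploitation. Only the exact Beta--Bernoulli martingale identity cancels this drop in expectation, so this step is the only point where the specific structure of the Bernoulli bandit is used; the surrounding reduction and propagation steps are purely combinatorial. The converse ("only if") direction of the iff is handled by contraposition: if $\tilde b \not\subseteq \tilde b'$, pick a leaf of $\tilde b$ outside $\tilde b'$ where at least one outcome branch of the computed subtree strictly improves on $U$; this belief witnesses $\mathcal{K}(\tilde b)(\bm b) > \mathcal{K}(\tilde b')(\bm b)$, contradicting universal $\leq$.
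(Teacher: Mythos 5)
Your forward direction is essentially the paper's argument, just organized more cleanly. The paper proves the same local fact by an explicit three-case analysis at a singleton root (comparing $\hat p_i^{\text{W}},\hat p_i^{\text{L}}$ to $q=\max_{j\neq i}p_j$: equality in the case where both exceed $q$, strict increase otherwise) and then asserts that iterating the argument handles general $\tilde b\subseteq\tilde b'$. You compress the case analysis into a single step --- lower-bound the successor's $U$ by the expanded arm's own posterior mean and invoke the Beta--Bernoulli martingale identity --- and you make explicit the two things the paper leaves implicit: the decomposition of $\tilde b\subseteq\tilde b'$ into a chain of single node expansions, and the upward propagation of the weak increase through $\max$ and convex combinations. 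That is a genuine improvement in rigor on the chaining step, and your identification of the martingale identity as the only place the Bernoulli structure enters is exactly right (it is what makes the paper's Case 1 an equality). One small point to tighten: your local lemma is phrased for the greedy arm $a^*$, but the expansion may be along a non-greedy arm $j$; the argument still works because $Q(a^*)$ is then unchanged while $Q(j)\geq p_j\tau$ by the same martingale bound, so the max over arms still weakly increases --- worth saying explicitly.

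The converse ("only if") sketch does not work. You propose to find a leaf of $\tilde b$ outside $\tilde b'$ at which the computed subtree \emph{strictly} improves on $U$, but such a leaf need not exist: the paper's own Case 1 (both child posteriors exceeding $q$) shows that a node expansion can leave every subjective value exactly unchanged. Consequently two incomparable planning-beliefs can induce identical value functions, so $\mathcal{K}(\tilde b)(\bm b)\leq\mathcal{K}(\tilde b')(\bm b)$ for all $\bm b$ does not imply $\tilde b\subseteq\tilde b'$, and the literal biconditional is false. The paper's proof silently drops this direction as well --- the ``iff'' in the statement should be read as ``whenever'' --- so your forward argument puts you at parity with the paper's actual content, but the contrapositive paragraph should be deleted rather than repaired.
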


This is a crucial property needed to terminate the search for minimal mind-changing meta-policies.  Here we consider how the agent's subjective value changes as a function of a single computational action. In particular, the theorem states that the subjective value (for the case of $N$ABB) only monotonically increases as the agent computes. This implies that the agent's subjective value is upper bounded by the optimal value of being in the state, as implied by the optimal policy of the underlying BAMDP (value under the complete DAG). Further, as we choose the greedy heuristic for the terminal condition (Eq. \ref{eq:greedy_act}), the lower bound of the subjective value is given by the value of the terminal heuristic (value under the singleton DAG\footnote{I.e. the trivial DAG with just a single (root) node.}).

\begin{corollary} \label{Thm:Termination}
    $\pi^*$ prescribes termination in all states $(\bm b, \tilde b)$ if $\exists $ a physical action $ i$, such that $Q(i,\bm b|\tilde b) \geq Q^*(j,\bm b)$, $\forall j \neq i$ \footnote{Note the subtle difference in the notation between the subjective $Q (\cdot | \cdot)$ and the optimal (for the underlying BAMDP) $Q^* (\cdot)$ functions. Both of these represent the $Q$-functions of the base problem.}.
\end{corollary} 

Due to Thm. \ref{Thm:monotonicity}, $Q^* (j,\bm b)$ is the upper bound of $Q(j,\bm b| \tilde b)$, $\forall j$. If in a state $(\bm b,\tilde b)$, the subjective $Q$-value corresponding to an arm $i$ is greater than the upper bound of the $Q$-values of the other arms $j\neq i$,  then, no matter how many computations the agent performs, its terminal action will always remain the same. From the minimal mind-changing property of Thm. \ref{Thm:MC} and Corollary \ref{Thm:MMC} recall that if the agent's terminal action cannot change, then it must terminate. 

\begin{definition}[$\mathcal{M}$-beliefs]
    $\mathcal{M}$ is the set of all beliefs $\bm b$ such that $Q(i,\bm b|\tilde b) \geq Q^*(j,\bm b), \forall j \neq i$, where $\tilde b$ is a singleton graph with the node $\bm b$. Here $i$ is the greedy arm and $j \neq i$ is a non-greedy arm.
\end{definition} 

\begin{corollary} \label{Thm:M_state_computaion}
    $\pi^*$ never prescribes computation along the non-greedy arm for a belief $\bm b \in\mathcal{M}$.
\end{corollary}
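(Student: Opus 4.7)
The plan is to show that for $\bm b \in \mathcal{M}$, the greedy arm $i$ dominates every non-greedy arm in the subjective $Q$-values under \emph{any} planning-belief $\tilde b$ rooted at $\bm b$, so that no amount of computation along a non-greedy arm can alter the terminal action $a_\perp$. Once this is established, the conclusion will follow from Corollary~\ref{Thm:MMC} (minimal mind changer), which forbids $\pi^*$ from prescribing computational trajectories that do not eventually change $a_\perp$.

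First I would lift the monotonicity result of Theorem~\ref{Thm:monotonicity} from subjective values to subjective per-arm $Q$-values. Since $\mathcal{K}(\tilde b)(\bm b, a)$ is computed by backward induction on the subtree of $\tilde b$ rooted at the action node $(\bm b, a)$, applying Theorem~\ref{Thm:monotonicity} to that subtree yields $\mathcal{K}(\tilde b)(\bm b, a) \leq \mathcal{K}(\tilde b^\prime)(\bm b, a)$ whenever $\tilde b \subseteq \tilde b^\prime$. In particular, taking $\tilde b^\prime$ to be the fully expanded planning-belief, the subjective $Q$-value is bounded above by the true BAMDP optimum $Q^*(\bm b, a)$.

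Next, let $\tilde b_0$ denote the singleton planning-belief consisting of the node $\bm b$ alone. For any planning-belief $\tilde b$ rooted at $\bm b$, applying per-arm monotonicity to the greedy arm $i$ together with the defining inequality of $\mathcal{M}$ gives, for every non-greedy arm $a \neq i$,
\[
\mathcal{K}(\tilde b)(\bm b, i) \;\geq\; \mathcal{K}(\tilde b_0)(\bm b, i) \;\geq\; Q^*(\bm b, a) \;\geq\; \mathcal{K}(\tilde b)(\bm b, a).
\]
Hence arm $i$ always attains the argmax of $\mathcal{K}(\tilde b)(\bm b, \cdot)$, so $a_\perp(\bm b, \tilde b) = i$ for \emph{every} $\tilde b$ rooted at $\bm b$.

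To close the argument, suppose for contradiction that $\pi^*$ at some state $(\bm b, \tilde b)$ prescribes a computational action along a non-greedy arm; by optimality this extends to a trajectory terminating at some $(\bm b, \tilde b^\star)$ with $\tilde b \subsetneq \tilde b^\star$. Corollary~\ref{Thm:MMC} then requires $a_\perp(\bm b, \tilde b^\star) \neq a_\perp(\bm b, \tilde b)$, but the previous step forces both to equal $i$, a contradiction. The main obstacle I anticipate is in rigorously establishing the per-arm version of monotonicity: one must verify that expansions outside the subtree rooted at an action node leave its subjective $Q$-value unchanged, while expansions inside that subtree increase it in the sense of Theorem~\ref{Thm:monotonicity}; once that bookkeeping is settled, the remainder is a direct application of previously proven results.
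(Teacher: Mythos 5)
Your inequality chain $\mathcal{K}(\tilde b)(\bm b, i) \geq \mathcal{K}(\tilde b_0)(\bm b, i) \geq Q^*(\bm b, a) \geq \mathcal{K}(\tilde b)(\bm b, a)$ is exactly the domination argument the paper relies on, and you justify it more carefully than the paper does (the paper's opening line silently upgrades the singleton-graph inequality in the definition of $\mathcal{M}$ to an inequality for arbitrary $\tilde b$, which is precisely the per-arm monotonicity plus $Q^*$ upper bound you spell out). That part is sound.

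The gap is in scope. Your closing step applies Corollary~\ref{Thm:MMC} at a state $(\bm b, \tilde b)$ whose \emph{root} belief is $\bm b \in \mathcal{M}$, and concludes that no computation at all is prescribed there. But that situation is already handled by Corollary~\ref{Thm:Termination} together with monotonicity: when the agent's current belief lies in $\mathcal{M}$, it terminates immediately, so "computation along the non-greedy arm" never even arises at the root. The content of Corollary~\ref{Thm:M_state_computaion} --- and the way it is used to prune the DFS over expansion trajectories in Algorithm~\ref{alg:search_trajectories} --- concerns beliefs $\bm b \in \mathcal{M}$ that occur as \emph{interior or leaf nodes} of a planning graph rooted at some other belief $\bm b_0 \notin \mathcal{M}$, where the agent may well keep computing. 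For that case your argument about $a_\perp(\bm b, \cdot)$ says nothing, because the terminal action being decided is the one at $\bm b_0$, not at $\bm b$. The paper closes this by a value-independence observation: since the backward-induction value assigned to the $\mathcal{M}$-node satisfies $V(\bm b \mid \tilde b) = Q(a{=}i, \bm b \mid \tilde b)$ for every $\tilde b$, it does not depend on the subtree under any non-greedy arm of $\bm b$; hence expanding that subtree leaves $V(\bm b \mid \tilde b)$, therefore every ancestor's value, therefore $a_\perp$ at the root, unchanged, while still incurring cost $c$ --- so it is never prescribed. Your domination chain is exactly the ingredient needed to establish $V(\bm b \mid \tilde b) = Q(a{=}i,\bm b\mid\tilde b)$, so the fix is to redirect your conclusion from "the terminal action at $\bm b$ never changes" to "the value of the node $\bm b$ inside any planning graph never depends on its non-greedy subtrees," and then propagate that invariance up to the root.
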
 
This follows a similar reasoning as the previous corollary.

Lastly, we take note of the fact that the increase in the subjective value $Q$ decreases geometrically with computation depth: $P(\bm b' \to \bm b'')$ includes the product of transition probabilities starting from $\bm b'$ to $\bm b''$, which only decreases geometrically with the temporal distance between $\bm b'$ and $\bm b''$. In other words, the return on computation diminishes geometrically. This suggests that it might be worthwhile to bound the maximum size of a planning-belief $\tilde b$ to be $|\tilde b |\leq k$, $|\tilde b|$ refers to the number of edges in $\tilde b$. Since one computational action introduces two edges, we start by assuming $k=2$, i.e. allowing at most one computational action in each step (i.e. a meta-myopic policy). We observe that our results are invariant for $2\leq k \leq 16$ and also for alternate approximation schemes (see \ref{app:robust_approximations} for further details), indicating that for our problem, setting a low bound on reasoning steps is near-optimal. Guided by these considerations, we build an algorithm (see \ref{app:pseudocode}) to find the solution to the meta-BAMDP. We proceed with analyzing the solutions and their implications for human behavior in TABB tasks.

\begin{figure*}
\centering
\subfloat[]{\includegraphics[width = 1.6in]{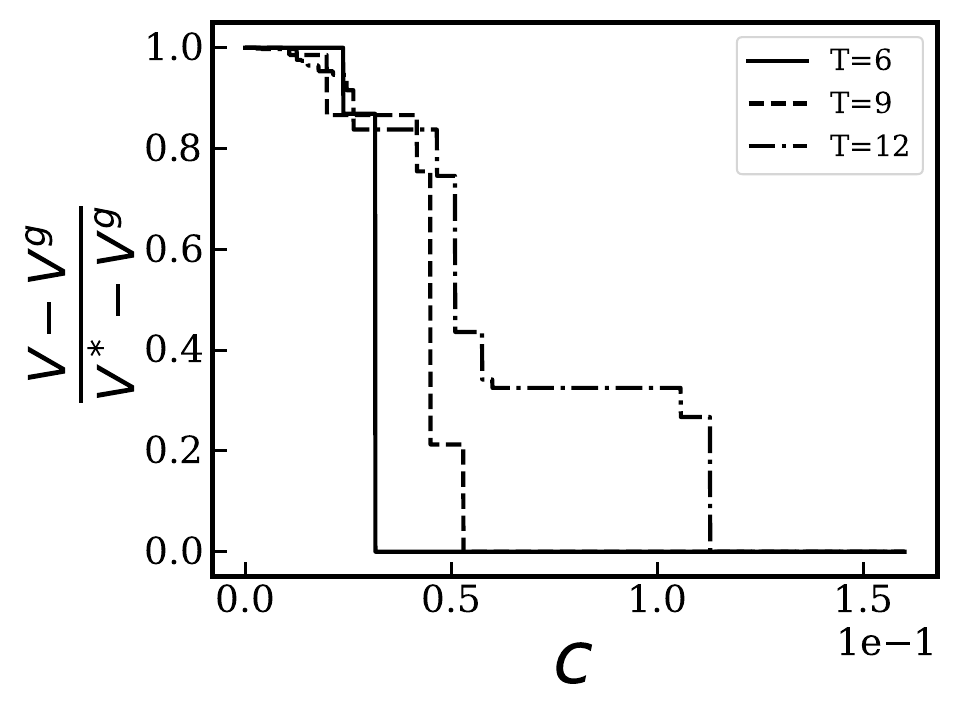}} 
\subfloat[]{\includegraphics[width = 1.6in]{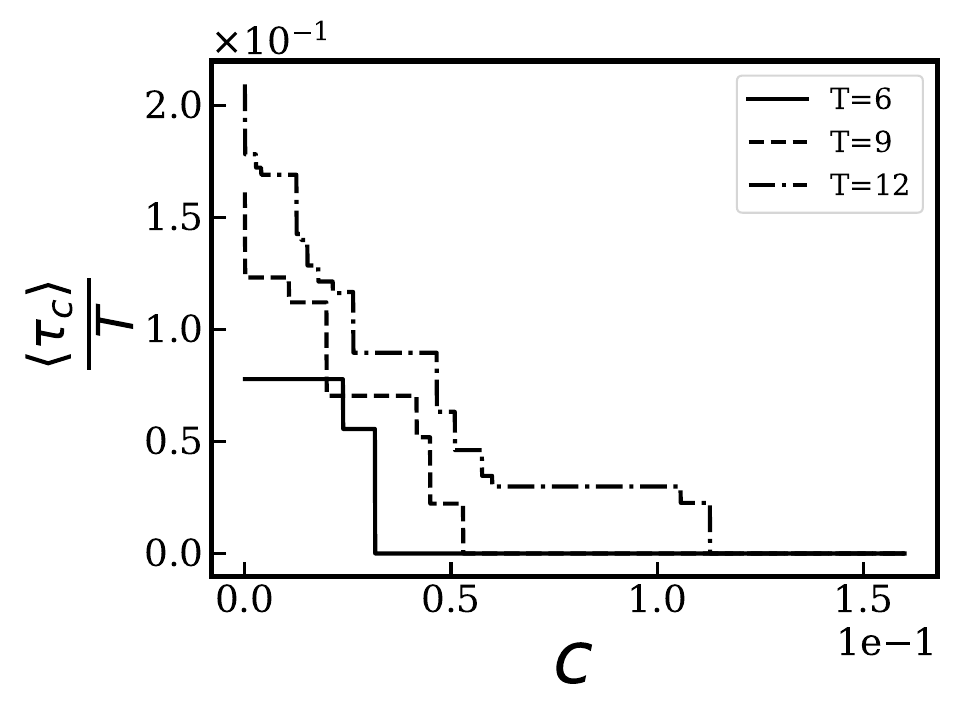}}
\subfloat[]{\includegraphics[width = 1.6in]{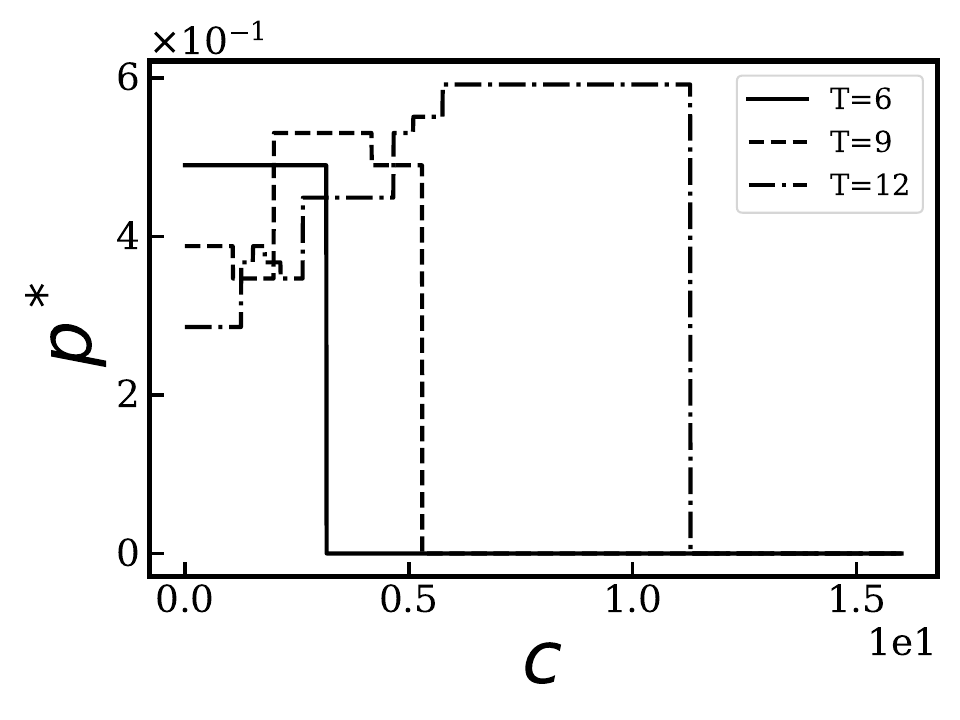}}
\caption{Resource-rational behavior under computational cost constraints. (a) Normalized, total expected reward accrued under the optimal meta-policy for a given computational cost and different task lengths. (b) Average time-step (in the TABB task) at which a node-expansion action is performed, as a function of the computational cost, for tasks of different lengths.  (c) Environments in which most computations are performed as a function of computational cost, for different task lengths.}
\label{fig:literature_match}
\end{figure*}
\section{Implications for human exploration behavior in TABB tasks}\label{sec:predictions_human_behavior}

 We discuss below the solutions of the meta-BAMDP for varying computational costs $c$ for TABB tasks ($N=2$), of different lengths. We consider TABB because this setup is one of the most studied in the literature. Additionally, the behavior for varying number of arms is not qualitatively different than the $N=2$ case\footnote{Refer to \ref{app:number_of_arms} for a discussion on the few points of differences with higher number of arms.}. We begin by presenting results for a uniform distribution over all TABB environments, where a TABB environment is given by a tuple $(p_1,p_2)\in [0,1]\times[0,1]$, of stationary reward probabilities. The initial state we consider for all meta-optimal agents is $(\bm b, \tilde b) = (\bm 0, \tilde 0)$, where $\bm 0$ is the belief where $\alpha_i = \beta_i=0$ for all arms $i$ and $\tilde 0$ is the singleton sub-DAG with just the root node $\bm 0$. In addition to the analyses presented below we also compare the meta-optimal policies to knowledge gradient (KG) policies. For this, interested readers are referred to \ref{app:KG_policies}.

When comparing behavior in tasks of different lengths, we first consider the normalized expected reward (i.e. ignoring the computational costs) under the optimal meta-policy as a function of computational cost (c.f. Fig. \ref{fig:literature_match}(a)).   From  Eq. \ref{eq:meta_value_decomp} it is evident that the external rewards accrued by a meta-policy are lower bounded by the value of the greedy policy $V^g$ and upper bounded by the value of the optimal BAMDP policy, $V^*$. Therefore we define the normalized expected reward as $V^N = \frac{V-V^g}{V^* - V^g}$. As one would intuit, we observe from Fig. \ref{fig:literature_match}(a) that $V^N$ monotonically decreases with $c$. This offers a novel computational explanation of the positive correlation observed between IQ and bandit task performance \citep{STEYVERS2009}, i.e\ individuals with higher working memory and attentional capacity have lower  computational cost $c$ and therefore are able to plan further ahead and make better decisions. Additionally, we observe a characteristic dependence of behavior on the task horizon $T$. For shorter tasks, we find that the discrete jumps in the normalized value are higher, i.e. that behavior is either near-optimal or near-greedy. A systematic exploration of the dependence of average accrued reward on $T$ should be able to test such behavior.

Beyond the average performance, the meta-BAMDP also allows us to estimate the conditions under which an agent performs a computation. We consider $\langle \tau_c\rangle$, the average time step at which a computational action is performed. In Fig. \ref{fig:literature_match}(b) we see that as computational cost decreases, people explore until later in the task. Here we plot $\langle \tau_c \rangle$ normalized by the task horizon $T$. For high computational costs, agents perform only a few computations and they do so earlier on in the task. Such behavior is likely observable by testing the reaction-times (and its dependence of task iterate $t$) of subjects. However, we do acknowledge that inferring computational actions from experiments is non-trivial, therefore, one might instead focus on the behavioral consequence of being able to compute, i.e. \textit{exploratory} actions, where an action is considered to be exploratory if it chooses the less rewarding arm based on the belief $\bm b$, and if both the arms are equally rewarding, it chooses the (thus far) less chosen arm.

 We have seen \textit{when} (in relation to the task horizon) meta-optimal agents compute. We now consider in \textit{which} environments the agents compute. As in asymmetric environments ($p_1\neq p_2$) the agent is more likely to reach $\mathcal{M}$ beliefs (and therefore not compute) than in symmetric environments ($p_1=p_2 = p$), we only consider symmetric environments. In Fig. \ref{fig:literature_match}(c) we plot the environment $p^*$ where the agent performs the most computations given a computational cost $c$. Quite interestingly, we observe qualitatively different behavior for high and low computational costs. We find that, for low computational costs, agents compute the most in reward scarce environments. On the other hand, for high computational costs, agents compute in reward rich environments.

\subsection{Mapping to experimental data}

We now compare the the behavior induced by the solutions of the meta-BAMDP and human behavior in Bandit tasks. It has been reported experimentally that there are majorly two ways in which human exploration behavior adapts to computational constraints - a reduction in \textit{directed exploration} with computational constraints (eg - time pressure, working memory loading, etc.) \citep{otto2014physiological,cogliati2019should}  and increase in choice repetition or decrease in action entropy with computational constraints \citep{wu2022}. We will show how both these qualitative features emerge from our model, without making any ad-hoc assumptions about human behavior.

\begin{figure}
\subfloat[]{\includegraphics[width = 1.6in]{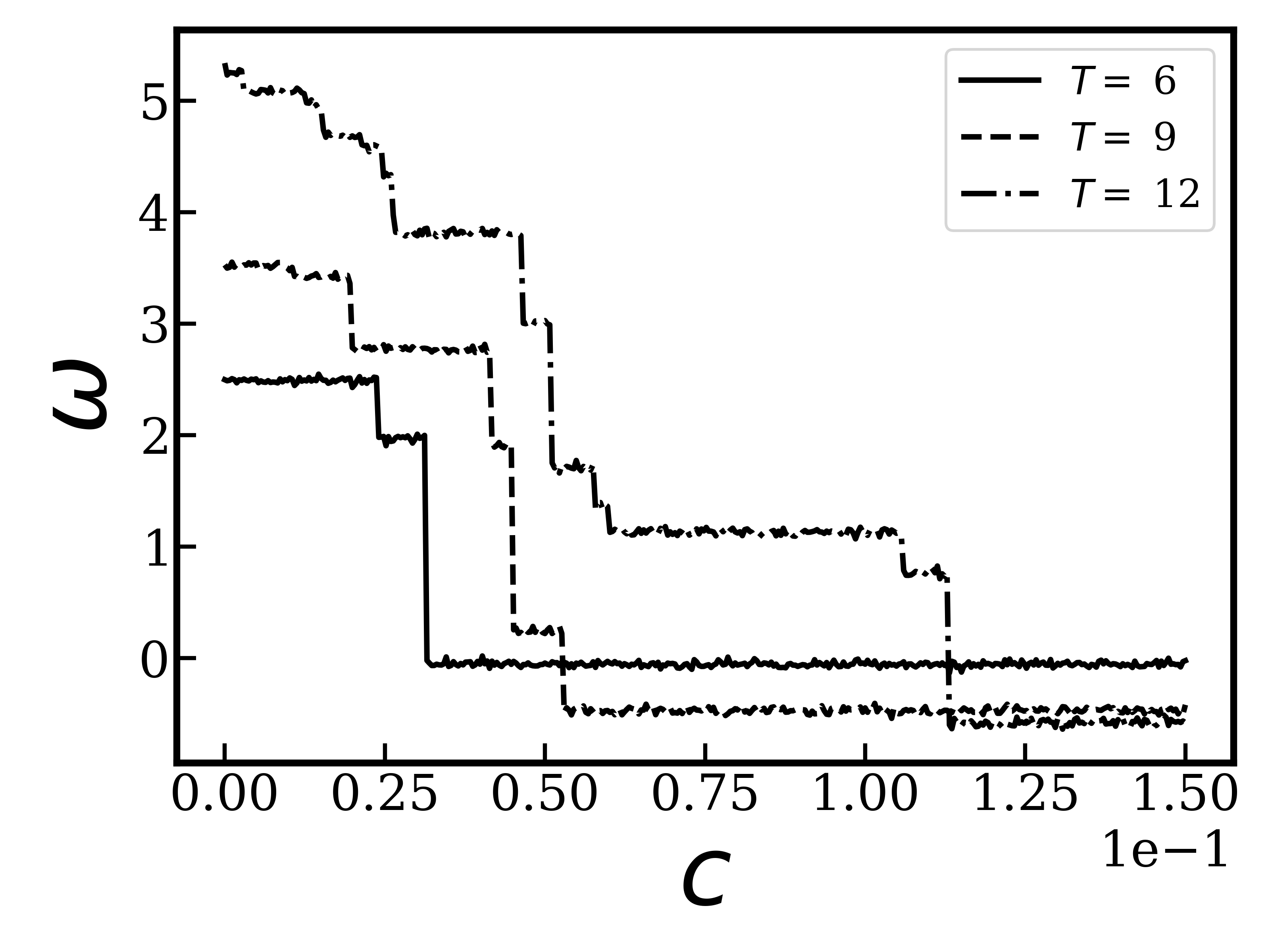}}
\subfloat[]{\includegraphics[width = 1.6in]{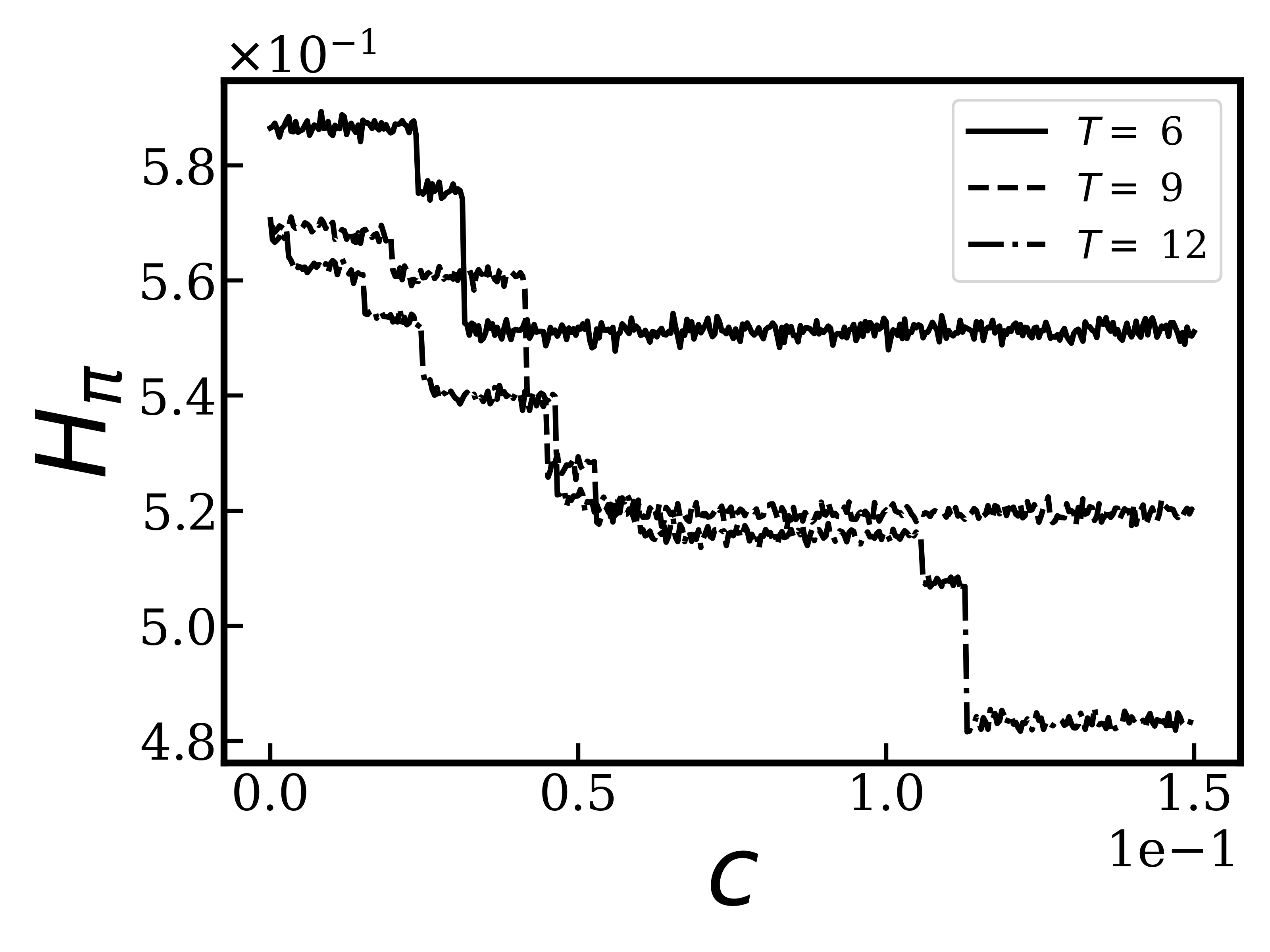}}
    \caption{Meta-BAMDP reproduces key signatures of human adaptation to cognitive load.
(a) As computational cost $c$ increases, the fitted uncertainty-bonus parameter (reflecting directed exploration) decreases. This matches experimental findings that cognitive load reduces directed exploration. The effect is consistent across task lengths.
(b) Action entropy (diversity of choices) also decreases with increasing computational cost, with longer-horizon tasks exhibiting lower entropy overall. This reflects increased choice repetition and aligns with recent empirical observations in humans.}
    \label{fig:experimental_map}
\end{figure}

First, we consider the heuristic strategy from \citep{BROWN2022}, where an agent takes actions according to $\pi(a=i|\bm b) \propto e^{-(\beta \hat \mu_i + \omega \hat \sigma_i)}$. Here $\hat \mu_i, \hat \sigma_i$ are the estimated mean and variance of the belief distribution corresponding to state $\bm b$, i.e. a soft-max policy based on a linear combination of expected reward and an ``uncertainty bonus''. The $\omega$ parameter would be our formalization of \textit{directed exploration}. We fit this policy to the behavior induced by the optimal meta-policy, in a symmetric environment with $p_1=p_2=0.5$, and plot the corresponding value of $\omega$ averaged over $10^5$ simulation runs (see Fig. \ref{fig:experimental_map}(a)), the uncertainty bonus (for details see \ref{app:heuristics_fit_procedure}). We observe that, as computational costs increase, uncertainty driven exploration decreases, irrespective of task horizon. Additionally, we also find that generally, increasing $T$ also increases $\omega$ (for a fixed computational cost), which also matches well with the experimental observations \citep{wilson2014,BROWN2022}.  That directed exploration reduces as a function of computational costs, seems to suggest that directed exploration is computationally costlier than exploitation. In light of the meta-BAMDP framework this further suggests that humans indeed have a greedy terminal heuristic and that computations are needed to be able to deviate from the baseline greedy strategy (towards optimality).

Second, we consider variations in action entropy\footnote{Here we consider the unconditional action entropy, which is simply the Shannon entropy of the histogram of actions taken in the task.} $H_\pi$ in response to variations in computational costs $c$. We observe that our meta-optimal agents demonstrate the same behavior as observed by \citep{wu2022}. In Fig. \ref{fig:experimental_map}(b) we show the action entropy $H_\pi$ (averaged over $10^5$ simulation runs) for meta-optimal agents in the symmetric environment with $p_1=p_2=0.5$. We observe that as computational costs increase, the action entropy decreases for games of all lengths. Additionally, for longer games, the action entropy is even lower, because of more available time to repeat actions. 

 The above maps to experimentally observed behavior demonstrate the power of the meta-BAMDP framework. Specifically, that such a simple model with only modest assumptions about computations is able to capture crucial aspects of human learning and exploration behavior in Bandit tasks. We now proceed to make further predictions about human behavior in bandit tasks.

\subsection{Sensitivity to computational cost manipulations }
Averaging over all the environments necessarily loses information about how subjects might adapt to specific environment statistics. We therefore, now proceed with exploring environment specific behavior. In particular, we are interested in how the variations of computational costs (time pressure, burdening working memory, etc.) impact the behavior. I.e., in which environments is the experimenter likely to observe a statistically significant response to experimental manipulations (of the computational cost)? 

For this we define two new quantities $\chi_\tau$ and $\chi_V$ which denote the \textit{sensitivity} of $\tau,V$ respectively, to changes in computational costs. More precisely, the sensitivity of an observable $X$ is defined to be \begin{equation}
    \chi_X =  \int dc\big(\frac{d  X }{d c}\big)^2.
\end{equation}

\begin{figure*}
\centering
\subfloat[$T=4$]{\includegraphics[width = 1.6in]{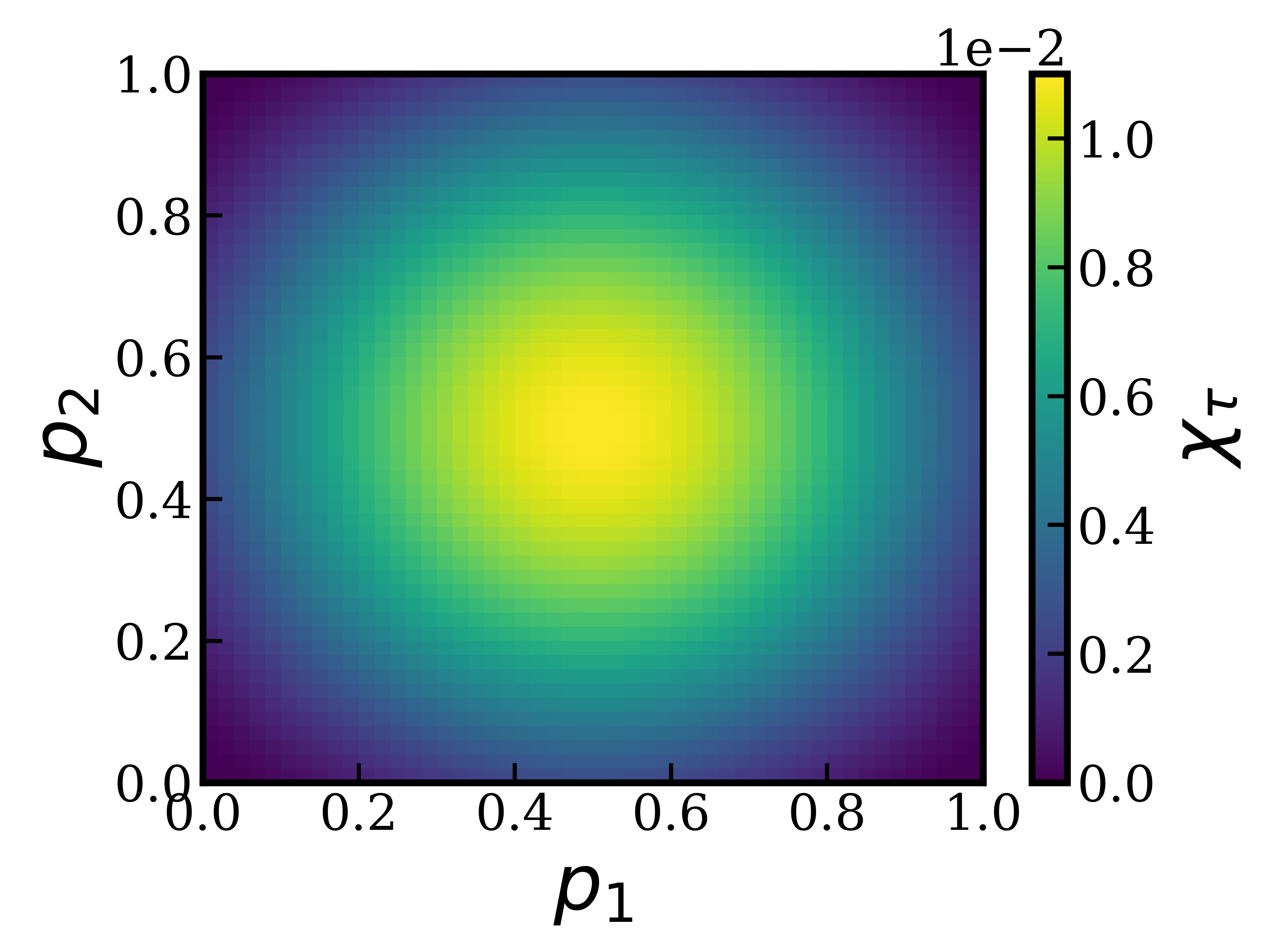}} 
\subfloat[$T=8$]{\includegraphics[width = 1.6in]{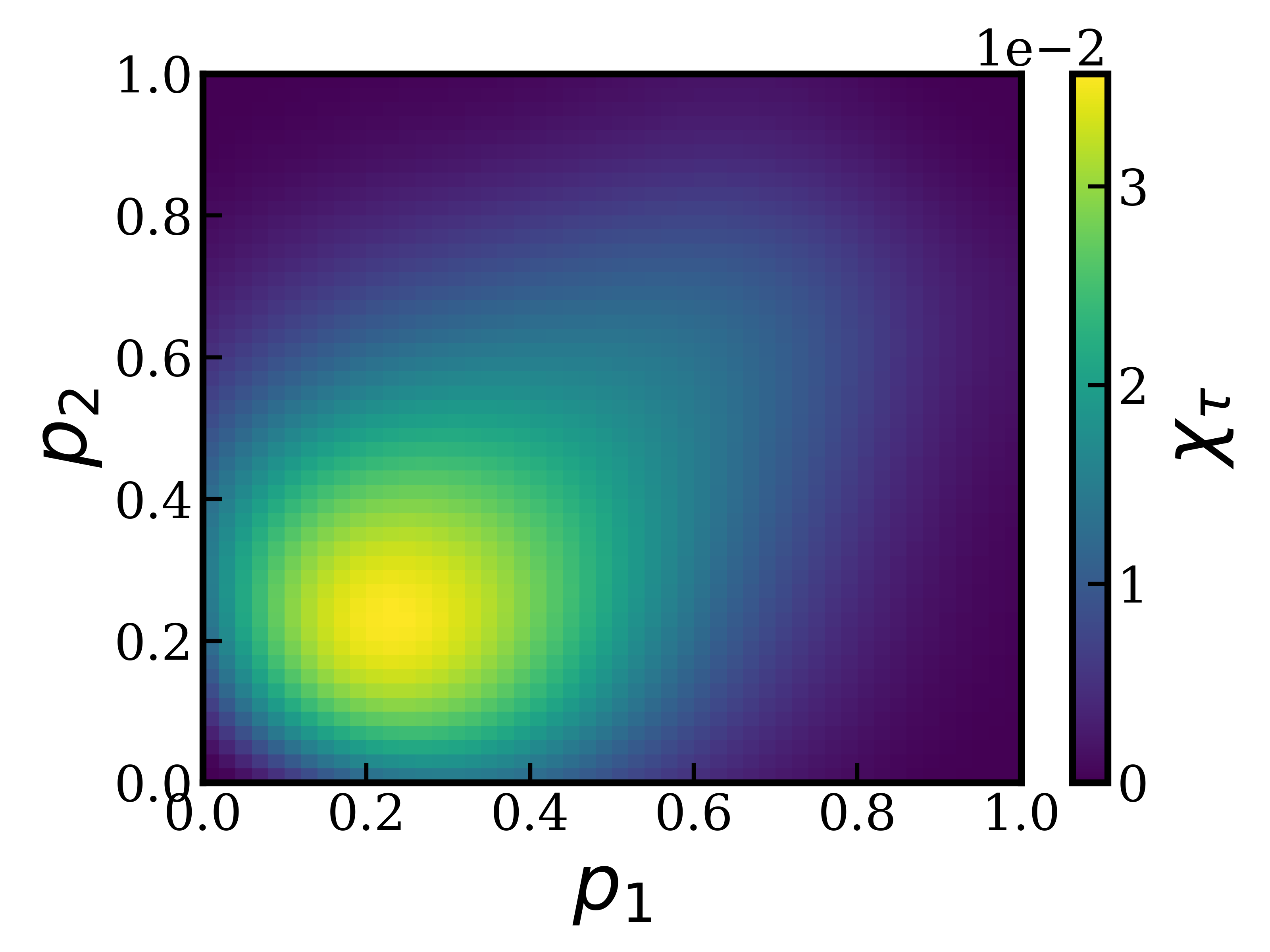}}
\subfloat[$T=12$]{\includegraphics[width = 1.6in]{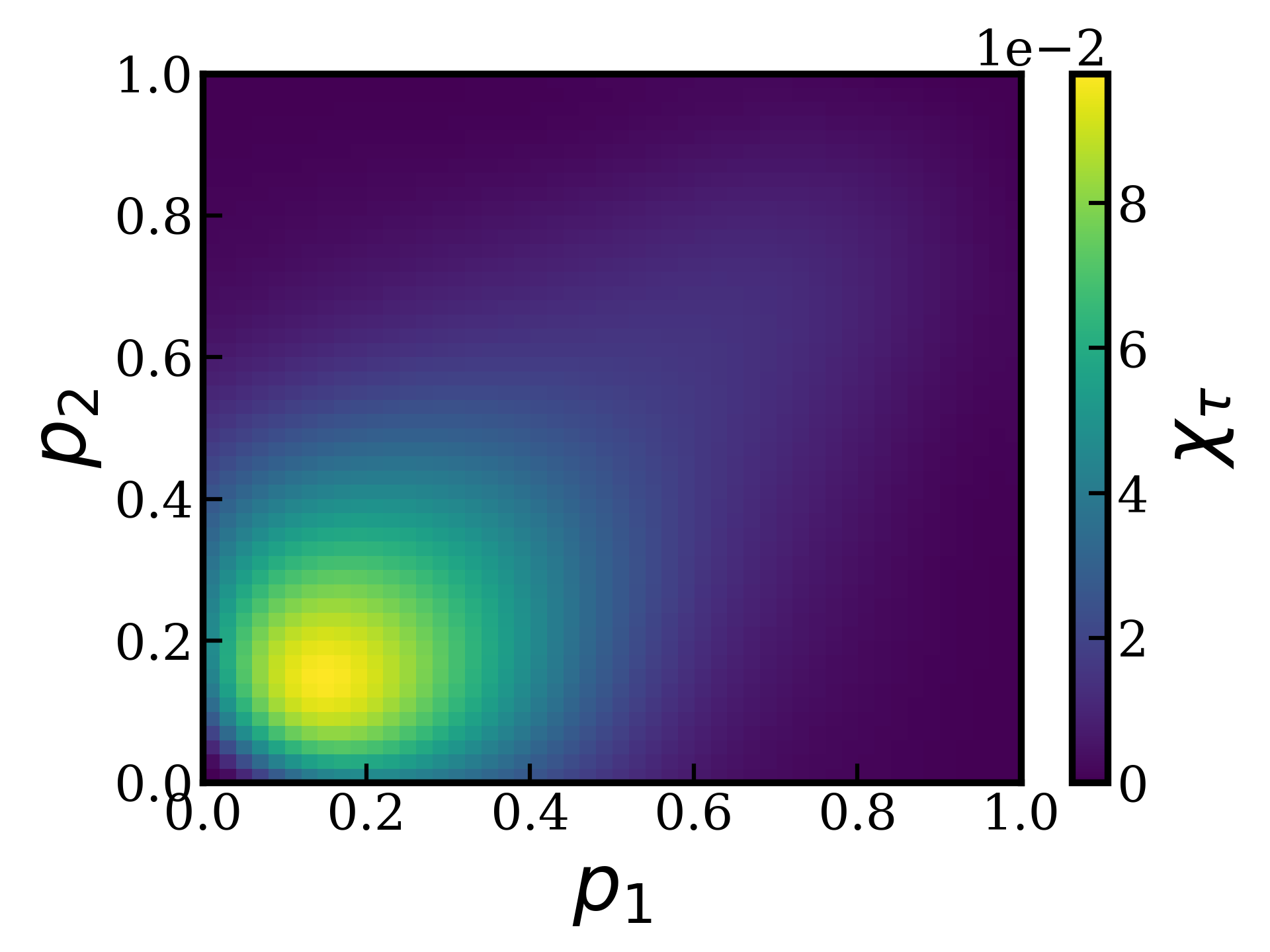}}\\
\subfloat[$T=4$]{\includegraphics[width = 1.6in]{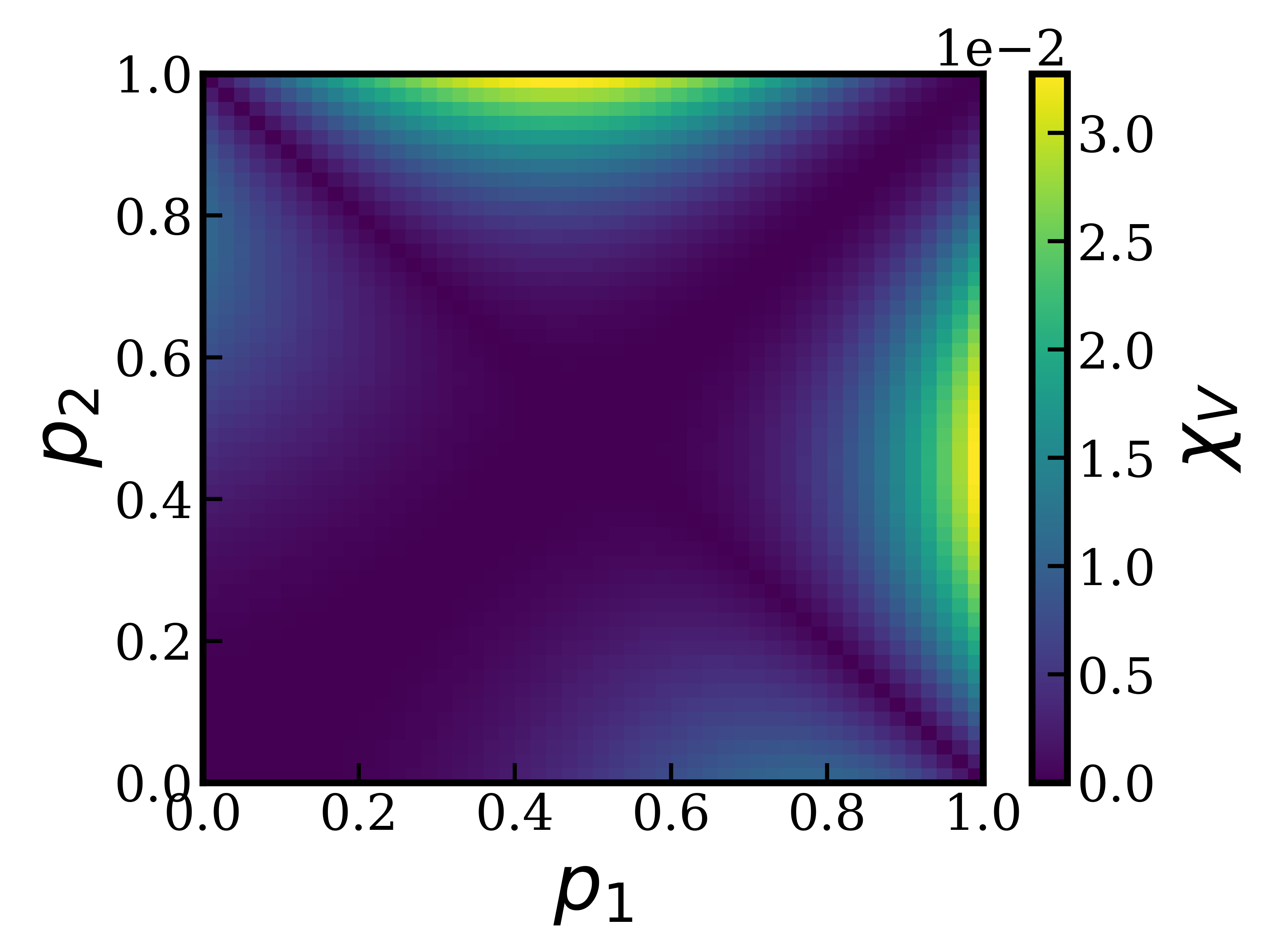}} 
\subfloat[$T=8$]{\includegraphics[width = 1.6in]{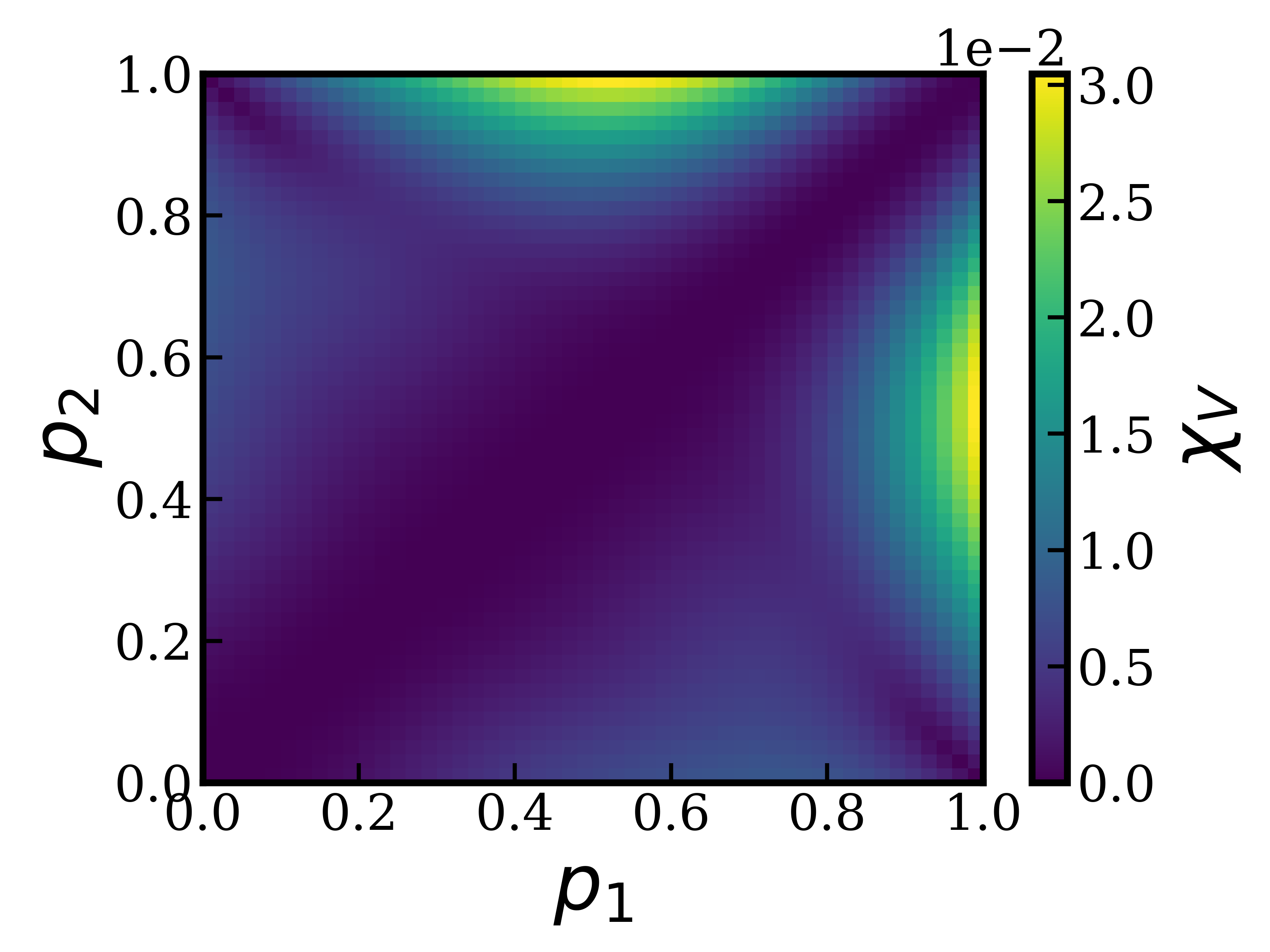}}
\subfloat[$T=12$]{\includegraphics[width = 1.6in]{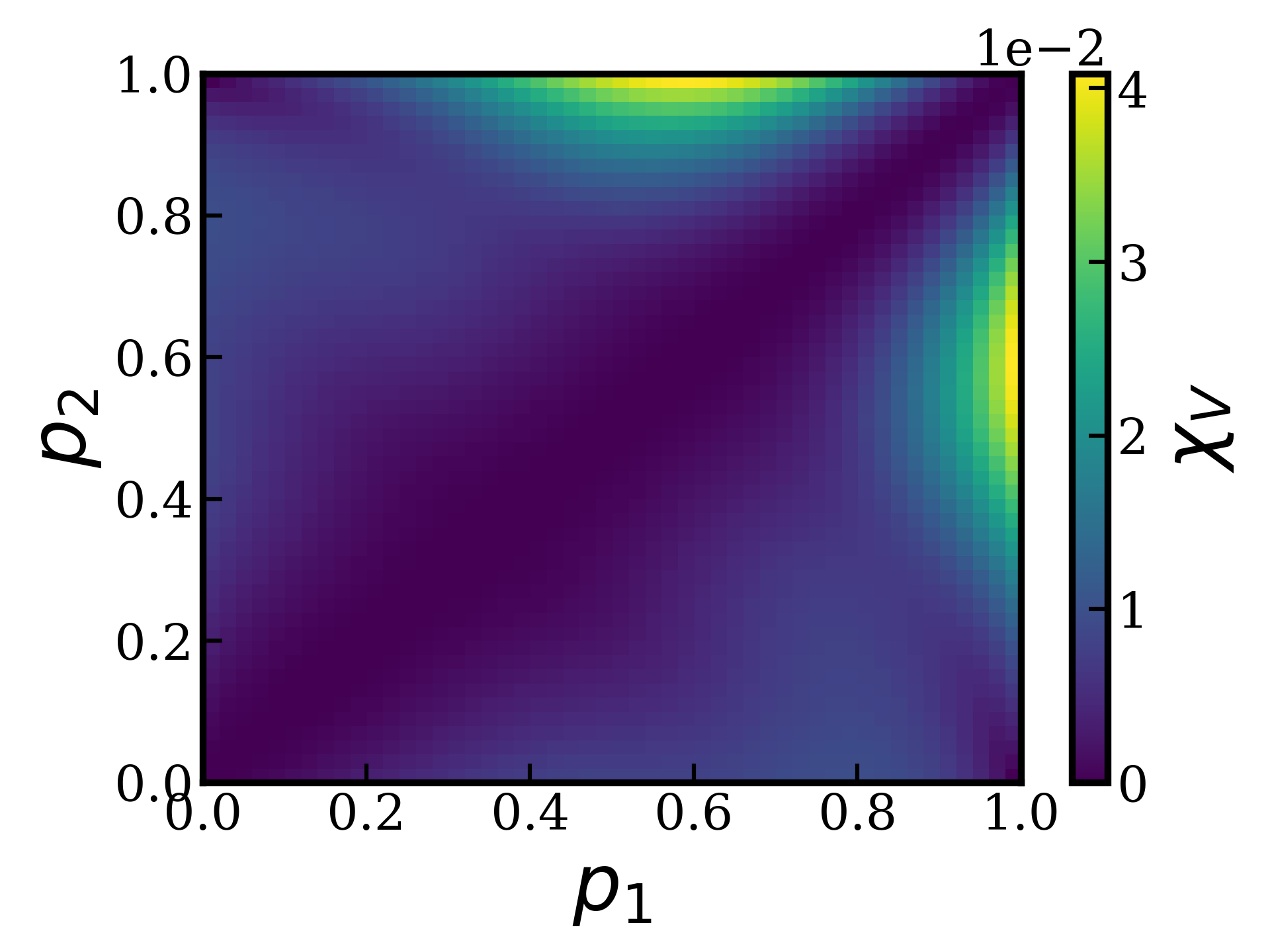}}
\caption{Environment-dependent sensitivity of exploration and performance to cognitive cost manipulations.(a,b,c) Sensitivity of average time at which exploratory actions are taken, to changes in computational cost, for different task/horizons lengths ($T=4,8,12$). Agents in sparse-reward environments ($p_1,p_2<0.5$) are more sensitive to cost, suggesting greater variance in exploration timing under cognitive load.(d,e,f) Sensitivity of total expected reward to changes in computational cost, for different task lengths/horizons. Greatest effects are seen in reward-rich environments with moderate asymmetry, where strategic planning provides the largest benefit. These results predict where manipulations (e.g., time pressure, WM load) will yield largest behavioral effects.}
\label{fig:predictions}
\end{figure*}

The observables of our interest are $\tau$, which represents the average time step at which the agent performs an exploratory action (as defined above), and $V$, which is the total expected reward obtained by the agent (averaged over multiple runs for a given environment). In Fig. \ref{fig:predictions} we show $\chi_\tau,\chi_V$ as a function of the environment $(p_1,p_2)$. We see that $\chi_\tau$ is maximized (yellow colored regions) for larger $T$ in increasingly reward-scarce environments (i.e. when $p_1,p_2\leq 0.5$). While $\chi_V$ shows the opposite trend. $\chi_V$ is maximized in reward-abundant environments (i.e. when $p_1+p_2\geq 1$), while the $p_i$ values are sufficiently distinct. The latter is true for the simple reason that, when $p_1\approx p_2$, the rewards obtained are independent of the behavior.

For suitable assumptions about the distribution of $c$ in the test population, this might suggest observing a greater variance in $\tau,V$ for environments chosen in the yellow regions, as compared to blue regions respectively. Alternatively, as aforementioned, these effects might be observable via an explicit control on $c$  through direct experimental manipulation, for instance, by imposing time constraints or loading working memory.

\section{Conclusions} \label{sec:conclusions}

We proposed a meta-BAMDP framework, extending the scope of metareasoning to also include unknown transition/reward dynamics, thereby providing a normative theory for solving such problems under computational constraints. We present a theoretical formulation of the framework in the Bernoulli Bandit task, and novel theorems to improve computational feasibility. Moreover, we show that theory accounts for human behavioral adaptation to computational constraints in bandit tasks. To our knowledge, this is the first normative explanation of these observations. Finally, our model also provides testable predictions for human behavior in bandit tasks. While additional work is needed to validate model predictions, as well as expanding theoretical understanding and practical implementation of a broader class of meta-BAMDP problems, this work represents theoretical and algorithmic advancement in reinforcement learning in humans and machine learning.

%  To summarize, our model offers: (i) a tractable normative account of exploration under resource constraints, (ii) predictions consistent with experimental findings, and (iii) novel testable predictions for future experiments.

%Relation to factored MDP's.

\begin{contributions} % will be removed in pdf for initial submission 
					  % (without ‘accepted’ option in \documentclass)
                      % so you can already fill it to test with the
                      % ‘accepted’ class option
    P.G. developed the theory, designed the study and wrote the paper.
    T.D.A. wrote the entire code.
\end{contributions}

\begin{acknowledgements} % will be removed in pdf for initial submission,
						 % (without ‘accepted’ option in \documentclass)
                         % so you can already fill it to test with the
                         % ‘accepted’ class option
    P.G. acknowledges the support of the Walter Benjamin program of the Deutsche Forschungsgemeinschaft (DFG) - project number GO 4136/1-1, which funded part of this project. The authors acknowledge the support of Angela Yu for providing computational and financial resources to be able to execute the project. The authors finally acknowledge Rahul Jain, Johann Bauer and Fred Callaway for helpful discussions during the initial conception of the idea.
\end{acknowledgements}

% References
\bibliography{biblio_2}

\newpage

\onecolumn

\title{A resource rational account of the explore-exploit dilemma via metareasoning \\(Supplementary Material)}
\maketitle

\appendix
\section{Appendix} \label{app:supplementary}

\subsection{Markov Decision Process - MDP} \label{app:MDP_def}

A Markov decision process (MDP) is formally defined by a tuple $(\mathcal{S}, \mathcal{A}, \mathcal{P}, \mathcal{R},T)$. Here $\mathcal{S}$ is the set of states of the environment, $\mathcal{A}$ the set of actions available to the agent, $\mathcal{P}: \mathcal{S} \times \mathcal{A} \times \mathcal{S} \rightarrow [0, 1]$ the transition probability function and  $\mathcal{R}: \mathcal{S} \times \mathcal{A} \times  \mathbb{R} \rightarrow [0,1]$  the reward distribution. The goal in an MDP is to find an optimal policy \(\pi^*(s,t)\) (one that maximizes the expected cumulative reward from any given state and time $t$ to a terminal horizon $T$). Formally, the objective is to maximize:

\begin{equation}\begin{split}
   V^{\pi}(s,t) &=  \sum_a \pi(a|s,t) \Big[\int r\mathcal{R}(r|s,a)dr  + \sum_{s'} \mathcal{P}(s'|s,a) V^{\pi}(s',t+1)\Big] , 
\end{split}
\end{equation}

with the terminal condition $V^\pi(s,T) = 0$. \( \pi(a|s,t) \) specifies the conditional probability of taking an action $a$ when the state is \( s \) and time is $t$, and \( V^{\pi}(s,t) \) is the value function under policy \(\pi\). The optimal policy \(\pi^*\) is then defined as $\pi^*(a|s,t) = \arg\max_\pi V^\pi(s,t)$. If the problem an agent faces can be modelled as an MDP, the agent would need access to $\mathcal{P}$ and $\mathcal{R}$ in order to find $\pi^*$. This process (of finding the optimal policy) is usually referred to as \textit{planning} (as opposed to \textit{learning}, which refers to learning the transition distributions from experience). In a more general setting, both $\mathcal{P}$ and $\mathcal{R}$ may be initially unknown to the agent,  and it would also need to learn them from experience. The agent could, for instance, learn the transition and reward distributions via Bayesian inference. If so, we end up with a BAMDP.

\subsection{Bayes-Adaptive Markov Decision Process - BAMDP}\label{app:BAMDP_def}

A Bayes-adaptive Markov decision process (BAMDP) extends the standard MDP framework by incorporating uncertainty about the reward and transition distributions, defined by the tuple \((\mathcal{S}_B, \mathcal{A}, \mathcal{P}_B, \mathcal{R}_B, b_0,T)\). Here, \(\mathcal{S}_B = \mathcal{S} \times \mathcal{B}\) represents the augmented state space, where \(\mathcal{S}\) is the physical state space, and \(\mathcal{B}\) is the belief space encapsulating probabilistic beliefs over parameterized distributions \(\mathcal{P}\) and \(\mathcal{R}\) (parameters \(\theta\), with belief \(b: \Theta \to [0,1]\)). The action space \(\mathcal{A}\), transition model \(\mathcal{P}_B: \mathcal{S}_B \times \mathcal{A} \times \mathcal{S}_B \rightarrow [0, 1]\), and reward distribution \(\mathcal{R}_B: \mathcal{S}_B \times \mathcal{A} \times \mathbb{R} \rightarrow [0,1]\).

The initial belief distribution over the models, \(b_0\), sets the starting conditions. The goal of a BAMDP is to derive an optimal policy \(\pi^*(s,b,t)\), one that maximizes the expected cumulative reward, accounting for model uncertainty. Formally, the maximization target is given by:

\begin{equation}
\begin{split}
    V^{\pi}(s,b,t) &=  \sum_a \pi(a|s,b,t) \Big[\int r\mathcal{R}_B(r|s,b,a)dr  +  \sum_{s',b'} \mathcal{P}_B(s',b'|a,s,b) V^{\pi}(s',b',t+1)\Big] ,
\end{split}
\end{equation} with the terminal condition $V^\pi (s,b,T) = 0$. The optimal policy \(\pi^* = \arg\max_\pi V^\pi(s, b,t)\) is defined as the one that maximizes \(V^\pi\) for all \((s, b) \in \mathcal{S}_B\) and $t\in\{0,\cdots,T\}$.

There are some crucial things to take note of here. A BAMDP is structurally distinct from a typical model-based RL algorithm like Dyna \citep{sutton1991dyna}. Not only is the agent updating its beliefs about $(\mathcal{P},\mathcal{R})$ to then solve the implied MDP, but also, $\mathcal{P}_B$ incorporates the evolution of the beliefs themselves. Therefore, while the agent might not know the environment dynamics, it could still make use of its belief update dynamics to guess the future status of its knowledge. Therefore, when moving from an MDP to a BAMDP, we "loosen" the restriction on the part of the agent -- i.e. from requiring it to know $(\mathcal{P},\mathcal{R})$, to requiring it to know $\mathcal{P}_B$. This apparent generality doesn't come for free: the state space of BAMDP is much larger than that of the underlying MDP, since an arbitrary probability distribution over continuous r.v.'s is infinite-dimensional.

\subsection{Pseudocode}\label{app:pseudocode}
The complete algorithm can be accessed via the url : \texttt{https://github.com/Dies-Das/meta-BAMPD-data}. The algorithm to find the solution to the meta-BAMDP involves two routines. First, to generate a pruned meta-graph and second to perform backward induction on this meta-graph, to find the optimal meta-policy. The latter is rather straight-forward, but the former is slightly complex to present and therefore for ease of understanding we provide a pseudocode below (Algorithms \ref{alg:meta_graph} and \ref{alg:search_trajectories}).

\begin{algorithm}
	\caption{Constructing the Pruned Meta-Graph}
	\label{alg:meta_graph}
	\begin{algorithmic}[1]
		\State Initialize the meta-graph $G$ with a root node containing $(\bm{b}_0, \tilde{b}_0)$.
		\State Initialize a queue $W$ and enqueue the root node.
		\While{$W$ is not empty}
		\State Dequeue a node $(\bm{b}, \tilde{b})$ from $W$.
		\State Determine the terminal action $a_\perp$ for $(\bm{b}, \tilde{b})$ using Eq.~\ref{eq:tree_to_action}.
		\For{each subsequent belief state $\bm{b}'$ for action $a_\perp$}
		\State Update $\tilde{b}'$ to be the subgraph of $\tilde{b}$ starting from $\bm{b}'$, i.e.\ $\tilde b' = R_{\bm b'}(\tilde b)$.
		\State Create new node $(\bm{b}', \tilde{b}')$ in $G$ if not already present.
		\State Add an edge from $(\bm{b}, \tilde{b})$ to $(\bm{b}', \tilde{b}')$.
		\State Enqueue $(\bm{b}', \tilde{b}')$ into $W$.
		\If{$Q(i,\bm b\mid \tilde b) \le Q^*(j,\bm b\mid \tilde b),\ \forall j\neq i$ \Comment{Cor.~\ref{Thm:Termination}}}
		\State \textbf{Call:} \textsc{SearchComputationalTrajectories}$((\bm{b}, \tilde{b}), G, W)$.
		\EndIf
		\EndFor
		\EndWhile
		\State \Return the pruned meta-graph $G$.
	\end{algorithmic}
\end{algorithm}

\begin{algorithm}
	\caption{\textsc{SearchComputationalTrajectories}}
	\label{alg:search_trajectories}
	\begin{algorithmic}[1]
		\State \textbf{Input:} Current node $(\bm{b}, \tilde{b})$, meta-graph $G$, queue $W$.
		\State Start depth-first search (DFS) through all possible computational expansion trajectories from $(\bm{b}, \tilde{b})$ ending with a terminal action.
		\State Use the restrictions from Corollaries \ref{Thm:M_state_computaion} and \ref{Thm:Termination}.
		\For{each trajectory in the above set}
		\State Determine the terminal action $a_\perp(\bm{b}, \tilde{b}')$ for the last state $(\bm{b}, \tilde{b}')$ in the trajectory.
		\State Calculate subsequent belief states after executing $a_\perp(\bm{b}, \tilde{b}')$, resulting in $(\bm{b}', \tilde{b}'')$.
		\If{$a_\perp(\bm{b}, \tilde{b}') \ne a_\perp(\bm{b}, \tilde{b})$ \Comment{Thm.~\ref{Thm:MC}}}
		\State Add new nodes $(\bm{b}', \tilde{b}'')$ to $G$ and $W$.
		\EndIf
		\If{$Q(i,\bm b\mid \tilde b') \le Q^*(j,\bm b\mid \tilde b'),\ \forall j\neq i$ \Comment{Cor.~\ref{Thm:Termination}}}
		\State Terminate the search along this trajectory.
		\ElsIf{Any terminal condition $X$ is met (see Sec.~\ref{app:robust_approximations} below)}
		\State Terminate the search along this trajectory.
		\EndIf
		\EndFor
	\end{algorithmic}
\end{algorithm}

\subsection{Robustness of the solution} \label{app:robust_approximations}
In order to test the robustness of the approximate solution, we loosen the restriction of myopic approximation from Sec. 5, in three different ways. These correspond to three distinct approximation schemes (or terminal conditions X in line 12 in Alg. \ref{alg:search_trajectories}) that we tested. First as in the main text, we upper bound the maximum size $|\tilde b|$ of a planning-belief $\tilde b$ that the agent can posses in any state by $k$. This may be viewed as bounding the working memory of an agent. Alternatively, we could also restrict the maximum number of computational actions $k_c$ that the agent is allowed to take between two consecutive terminal actions. Lastly, as the rewards diminish geometrically with depth, we bound the maximal depth $d$ we search for minimal mind-changers. While staying within the bounds of the computational resources at our disposal, we find that the optimal solution remains invariant for $k_c\in\{1,2,3\}$, for $2\leq k \leq 16$, and $d\leq 4$.

\subsection{Comparing heuristic policies to optimal meta-policies} 
\begin{figure}
\centering
\subfloat[]{\includegraphics[width = 1.6in]{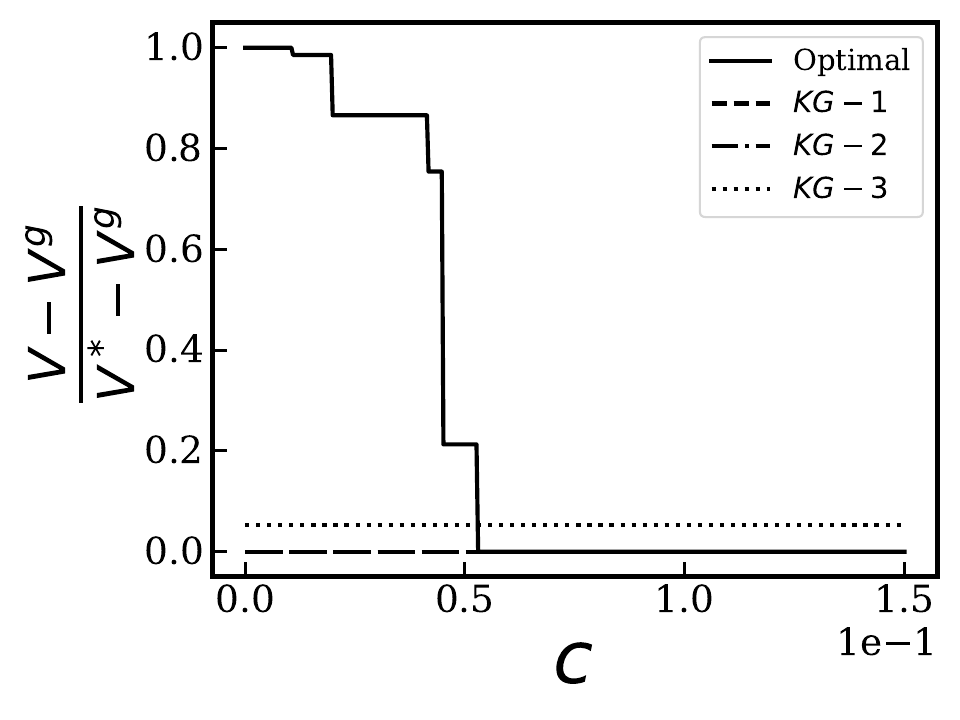}}
\subfloat[]{\includegraphics[width = 1.6in]{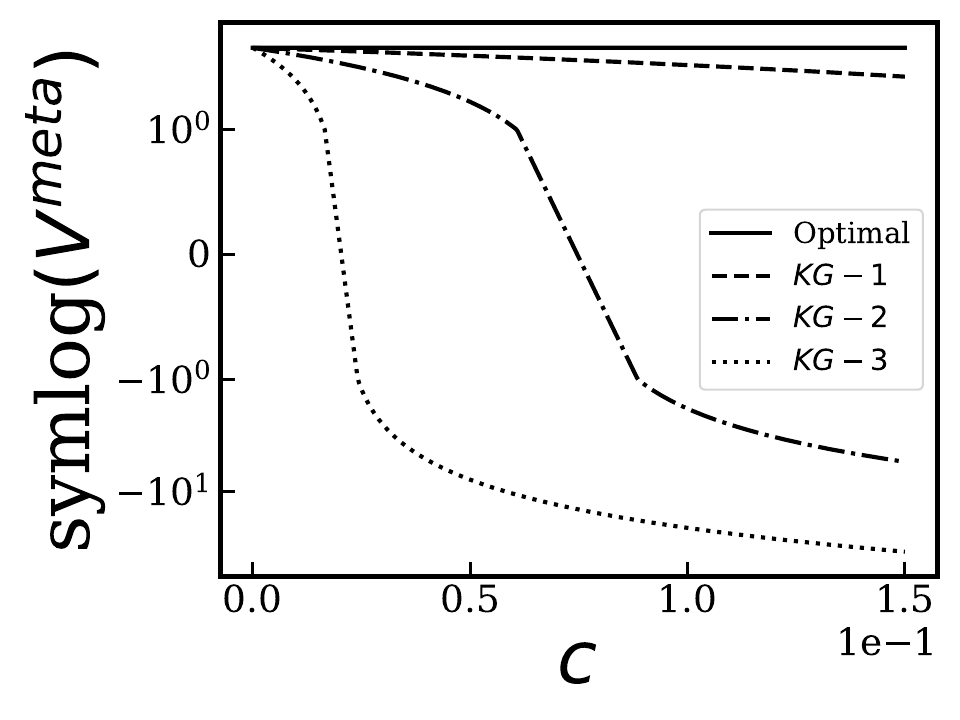}}
\subfloat[]{\includegraphics[width = 1.6in]{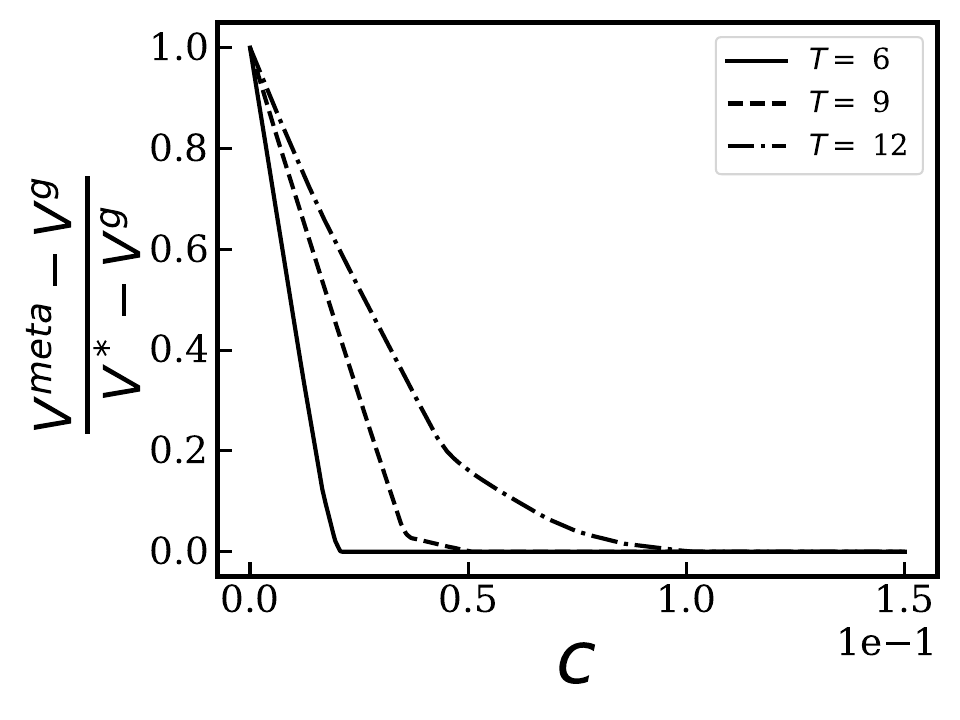}}

\caption{(a) Normalized reward gained as a function of computational cost for $(N,T)=(2,9)$, for the meta-optimal policy and the knowledge gradient KG-$l$ polices with $l \in (1,2,3)$. (b) The meta-value (on a symlog scale) for $(N,T)=(2,9)$, for the meta-optimal policy and the knowledge gradient KG-$l$ polices with $l \in (1,2,3)$. (c) Normalized meta-value for optimal policies as a function of $T\in \{6,9,12\}$ and computational costs.}
\label{fig:KG_compare}
\end{figure}
\subsubsection{Knowledge gradient policies} \label{app:KG_policies}
In order to provide a point of reference for understanding the behavior of the meta-optimal policies, we compare them to KG-$l$ policies\footnote{We consider KG policies as they are compatible with the metareasoning framework. Other heuristic policies such as Thompson Sampling (TS) or Upper Confidence Bound (UCB) do not admit a compatible notion of a computation that we consider here. Therefore we cannot comment on their meta-optimality and consequently they do not make a meaningful comparison point.} (see Fig. \ref{fig:KG_compare}). These policies are $l$-step look-ahead policies, where the look-ahead is computed at each time step. In the context of the meta-BAMDP framework these policies, at each time step construct $\tilde b$ of depth $l$ starting from the root node $\bm b$ at that time-step and take the optimal action given by $a_\perp$ (Eq. \ref{eq:tree_to_action}). At the next time-step, the previous $\tilde b$ is forgotten and a new $\tilde b^\prime$ of depth $l$ is constructed again from the new root node $\bm b'$ obtained via a 1-step transition from $\bm b$. The original KG policy of \citep{frazier2008} would be a KG-$1$ policy. In Fig. \ref{fig:KG_compare} (a), we compare the normalized performance of the KG-$l$ with that of the meta-optimal policy for $(N,T) = (2,9)$. We find that both KG policies for $l= 1,2$ seem to perform equivalently to the greedy policy (or KG-$0$) and KG-$3$ performs only marginally better than the two.

Notice that these policies are rather wasteful in the computations they perform as the number of computations grows polynomially in $l$ and exponentially in $N$ (see \ref{app:complexity_concerns}). This fact can be seen explicitly in Fig. \ref{fig:KG_compare} (b) where we present the unnormalized (symmetric logarithm of the) metalevel value $V^{\text{meta}} = V - \langle n_c \rangle c$ for the optimal and the KG-$l$ policies. Here $\langle n_c \rangle$ represents the ensemble average number of computations done by the policy. One may view $V^{\text{meta}}$ as the metalevel performance measure, as it takes into account both the task performance and the computational costs incurred. As can be clearly seen, the KG policies have a rather low metalevel performance as compared to the optimal policies. In Fig. \ref{fig:KG_compare} (c) we also demonstrate the metalevel performance of the optimal policies as a function of $T$. 

Finally, we'd like to remind the reader that our goal is not to create policies that outperform known heuristic policies. This can be seen by the fact that we are making use of $Q^*$ in order to obtain the meta-optimal policies (see Corollary \ref{Thm:Termination} for instance). Instead our goal is to obtain the behavior generated by these meta-optimal policies to compare with and explain human behavior. Nonetheless, insights from this exercise are likely to also help find better heuristic policies. For instance, if one obtains an approximation of $\mathcal{M}$ states, one might be able to construct a hybrid KG algorithm which is KG-$0$ in $\mathcal{M}$ states and KG-$l$ otherwise, which will certainly outperform the vanilla KG-$l$ policy.

\subsubsection{A generalized linear model including uncertainty bonus}\label{app:heuristics_fit_procedure}
For each computational cost $c$ which we choose from 400 uniformly distributed points in the interval $[0,0.15]$, we solve the meta-BAMDP problem and find the optimal meta-policy. With each such meta-policy we run $10^5$ simulations of agents with the initial condition $(\bm b,\tilde b) = (\bm 0, \tilde 0)$ as mentioned in the main text. The rewards are sampled from a symmetric environment $p_1=p_2=0.5$ which is unknown to the agent. For each simulation run, we perform a fit to the obtained action and reward trajectory, by minimizing the negative log likelihood under the heuristic policy $\pi(a=i|\bm b) \propto e^{-(\beta \hat \mu_i + \omega \hat \sigma_i)}$, as a function of $\omega$. We consider the $\beta \times \omega$ space to be bounded by the square $(0,100)\times (-10,10)$, to remain broadly consistent with the work of \citep{BROWN2022}.  As the optimal policies from the BAMDP are deterministic (except when ties are broken), the estimated value $\beta$ turns out to be suitably large ($\approx 80$ in our case). After obtaining the best fit $\omega$ for each simulation run, we consider its average over all the simulation runs.

\subsection{Proofs of the theorems}\label{app:proofs}

\subsubsection{Proof of Thm. \ref{Thm:MC}}

\begin{proof}
We prove the theorem by contradiction. Assume $\tilde b^\prime \neq \tilde b$ and $a_\perp(\bm b, \tilde b) = a_\perp(\bm b, \tilde b^\prime)$. We will show that this assumption leads to a contradiction of the optimality of $\pi^*$. We begin by noting two facts. \begin{itemize}
    \item Since $\tilde b^\prime$ includes all nodes and edges of $\tilde b$, we have $\tilde b \subset \tilde b^\prime$.
    \item The subjective $Q$-value $\mathcal{K}(\tilde b)(\bm b,a)$ depends only on the portion of $\tilde b$ that is \textit{reachable} from $\bm b$, denoted $R_{\bm b}(\tilde b)$:
    \[
    \mathcal{K}(\tilde b)(\bm b,a) = \mathcal{K}(R_{\bm b}(\tilde b))(\bm b,a).
    \]
\end{itemize} As stated in the theorem we assume the current state to be $(\bm b, \tilde b)$ and now construct two meta-policies as follows:
\begin{itemize}
    \item \(\pi_1 := \pi^*\): This policy prescribes computing from the starting state $(\bm b, \tilde b)$ until $(\bm b,\tilde b^\prime)$ and then terminating, i.e. in state $(\bm b, \tilde b)$ and in subsequent states only chooses $a\in \mathcal{C}$ until reaching $(\bm b,\tilde b^\prime)$, then chooses $\perp$.
    \item \(\pi_2\): This policy imitates $\pi_1$ in all states except in the following states. It chooses $\perp$ immediately in $(\bm b, \tilde b)$. In state(s) \((\bm b^\prime, R_{\bm b^\prime}(\tilde b))\), chooses $a\in \mathcal{C}$ until reaching $R_{\bm b^\prime}(\tilde {b}^\prime))$, then chooses $\perp$. Here $\bm b^\prime$ is any belief reachable from $\bm b$ in one physical action.

\end{itemize} We now consider the behavior generated by the two trajectories. Since we assume \(a_\perp(\bm b, \tilde b) = a_\perp(\bm b, \tilde b^\prime)\), both policies lead to the same terminal action from the state \((\bm b, \tilde b)\). After termination, consider the resulting belief state \(\bm b^\prime\). The subsequent states reached by the two policies are:
\begin{itemize}
    \item \(\pi_1\): \((\bm b^\prime, R_{\bm b^\prime}(\tilde b^\prime))\),
    \item \(\pi_2\): \((\bm b^\prime, R_{\bm b^\prime}(\tilde b))\).
\end{itemize}

Since $\tilde b \subset \tilde b^\prime$, it follows that:
\[
R_{\bm b^\prime}(\tilde b) \subseteq R_{\bm b^\prime}(\tilde b^\prime).
\]

From the state \((\bm b^\prime, R_{\bm b^\prime}(\tilde b))\), the policy \(\pi_2\) prescribes computation till the agent reaches the state \((\bm b^\prime, R_{\bm b^\prime}(\tilde b^\prime))\) and then it terminates. In all other states, it imitates $\pi_1$. Thus, the two policies generate the same physical behavior. The only difference between \(\pi_1\) and \(\pi_2\) lies in the timing of the computations:
\begin{itemize}
    \item \(\pi_1\) performs computations when the belief state is $\bm b$ and then terminates. 
    \item \(\pi_2\) terminates immediately and performs computations after reaching the belief state \(\bm b^\prime\).
\end{itemize}

Since the computations needed to transition from \(R_{\bm b^\prime}(\tilde b)\) to \(R_{\bm b^\prime}(\tilde b^\prime)\) are a subset of the computations performed to expand \(\tilde b\) to \(\tilde b^\prime\), i.e. we have:
\[
|\tilde b^\prime| - |\tilde b| \geq |R_{\bm b^\prime}(\tilde b^\prime)| - |R_{\bm b^\prime}(\tilde b)|.
\]

This inequality (see Lemma \ref{lem:ineq}) shows that \(\pi_2\) incurs a smaller (or at least equal) computational cost compared to \(\pi_1\).

We have shown that \(\pi_2\) generates the same physical behavior as \(\pi_1\) but incurs smaller or equal computational cost. I.e. \(\pi_2\) can only be better than \(\pi_1\). This contradicts the assumption that \(\pi_1 = \pi^*\) is the optimal meta-policy. Hence, for \(\pi^*\) to be optimal, it must satisfy the condition that either:
\begin{enumerate}
    \item $\tilde b^\prime = \tilde b$, or
    \item $a_\perp(\bm b, \tilde b) \neq a_\perp(\bm b, \tilde b^\prime)$ when $\tilde b^\prime \neq \tilde b$.
\end{enumerate}
Thus, the theorem is proved.
\end{proof}

One interesting observation on Theorem \ref{Thm:MC} is as follows. Assume that in the case of two arms (i.e. $N=2$), the optimal meta-policy either prescribes terminating immediately, thus pulling some arm A or prescribes computation until the agent's mind-changes. But if the policy doesn't terminate immediately, then the agent already knows that the optimal action in such a scenario is B. This way, one might think, the agent can cheat its way out of computing. However, it is important to know that computation has downstream effects as the computational beliefs $\tilde b$ are carried forward to the next time-step, thereby impacting which future states are visited. Therefore, such cheating could result to sub-optimal futures. Secondly, and more technically, the agents in our framework are restricted to act according to Eq. \ref{eq:greedy_act}, and can therefore not cheat in this manner.

\subsubsection{Proof of Corollary \ref{Thm:MMC}}

\begin{proof}
Let $\pi^*$ be the optimal meta-policy, and suppose that in state $(\bm b, \tilde b)$, $\pi^*$ prescribes computation until $(\bm b, \tilde b^\prime)$, where $\tilde b^\prime \neq \tilde b$. Consider the computation trajectory from $\tilde b$ to $\tilde b^\prime$ as:
\[
\tilde b \to \tilde b_1 \to \tilde b_2 \to \cdots \to \tilde b_k \to \tilde b^\prime,
\]
where $\tilde b \subset \tilde b_i \subset \tilde b_{i+1}\subset \tilde b^\prime$,  $\forall i$. More specifically, for the Bernoulli bandit scenario $|\tilde b_{i+1} | -|\tilde b_{i}| = N $ (see Fig. 1).

From Theorem \ref{Thm:MC}, we know that:
\[
a_\perp(\bm b, \tilde b) \neq a_\perp(\bm b, \tilde b^\prime).
\]

Now, suppose for contradiction that there exists some $i \leq k$ such that:
\[
a_\perp(\bm b, \tilde b_i) = a_\perp(\bm b, \tilde b^\prime).
\]

By applying Theorem \ref{Thm:MC} from the starting state $(\bm b, \tilde b_i)$, we must have:
\[
\tilde b_i = \tilde b^\prime.
\]

However, this contradicts the assumption that $\tilde b_i \subset \tilde b^\prime$ and $\tilde b_i \neq \tilde b^\prime$ for all $i \leq k$. Therefore, such an $i$ cannot exist, and for every intermediate subgraph $\tilde b_i$ along the trajectory, we have:
\[
a_\perp(\bm b, \tilde b_i) \neq a_\perp(\bm b, \tilde b^\prime).
\]

This ensures that $\pi^*$ only prescribes computation until reaching the smallest $\tilde b^\prime$ (along any given path) where the terminal action changes, making $\pi^*$ a minimal mind-changer.

\end{proof}

\subsubsection{Proof of Thm. \ref{Thm:monotonicity}}

\begin{proof}
    Recall the belief state:
\[
\bm{b} = (\alpha_1,\beta_1,\alpha_2,\beta_2,\ldots,\alpha_N,\beta_N),
\]
where each pair $(\alpha_i,\beta_i)$ parameterizes the posterior Beta distribution for the reward probability of arm $i$. Let $\tilde{b}$ be the current planning-belief subgraph consisting only of the root node $\bm{b}$.

Define $p_i$ as the (subjective) expected reward probability of arm $i$ given the belief $\bm{b}$:
\[
p_i = \frac{\alpha_i + 1}{\alpha_i + \beta_i + 2}.
\]
Let $\tau$ be the remaining number of actions.

We now define the subjective value function under $\tilde{b}$ as:
\[
V(\bm{b}|\tilde{b}) = \max_a\mathcal{K}(\tilde b)(\bm b, a) = \max_a Q(a,\bm{b}|\tilde{b}),
\] where $Q(a,\bm{b}|\tilde{b})$ represents the subjective $Q$-function of the meta-BAMDP under the computational belief $\tilde b$. Initially, since no computations have been done and the only estimate we have is the terminal heuristic from Eq. 7, we have:
\[
Q(a=i,\bm{b}|\tilde{b}) = p_i \tau \quad \text{for all } i \in \{1,\dots,N\}.
\]

Without loss of generality, we consider performing a single computational action on arm $i$. After this computation, we obtain a refined planning-belief $\tilde{b}_i$. Note that $Q(a=j,\bm{b}|\tilde{b}_i) = Q(a=j,\bm{b}|\tilde{b})$ for all $j \neq i$, since the computation only adds detail to the reachable subgraph rooted at arm $i$.

When we expand arm $a=i$, we introduce two possible outcomes:
- W (Win): The reward is obtained from arm $i$, effectively increasing $\alpha_i$ by 1.
- L (Loss): No reward from arm $i$, effectively increasing $\beta_i$ by 1.

This leads to updated probabilities for these child states:
\[
\hat{p}_i^\text{W} = \frac{\alpha_i + 2}{\alpha_i + \beta_i + 3}, \quad
\hat{p}_i^\text{L} = \frac{\alpha_i + 1}{\alpha_i + \beta_i + 3}.
\]

After the expansion, the value of the child states $V^\text{W}$ and $V^\text{L}$ must consider all arms. Let:
\[
q = \max_{j \neq i} p_j.
\]
Then:
\[
V^\text{W} = \max(\hat{p}_i^\text{W}, q)(\tau-1), \quad V^\text{L} = \max(\hat{p}_i^\text{L}, q)(\tau-1).
\]

The $Q$-value for arm $i$ after the expansion is:
\[
Q(a=i,\bm{b}|\tilde{b}_i) = p_i(1 + V^\text{W}) + (1 - p_i) V^\text{L}.
\]

Our goal is to compare $Q(a=i,\bm{b}|\tilde{b}_i)$ to the original $Q(a=i,\bm{b}|\tilde{b}) = p_i \tau$. As in the two-armed case, consider three main orderings of $\hat{p}_i^\text{W}, \hat{p}_i^\text{L}$ relative to $q$:

\begin{enumerate}
    \item \textbf{Case 1:} $\hat{p}_i^\text{W} > \hat{p}_i^\text{L} > q$.
    
    In this case:
    \[
    V^\text{W} = \hat{p}_i^\text{W}(\tau-1), \quad V^\text{L} = \hat{p}_i^\text{L}(\tau-1).
    \]
    Thus:
    \[
    Q(a=i,\bm{b}|\tilde{b}_i) = p_i [1 + \hat{p}_i^\text{W}(\tau-1)] + (1 - p_i)\hat{p}_i^\text{L}(\tau-1).
    \]

    Substituting the definitions of $p_i$, $\hat{p}_i^\text{W}$, and $\hat{p}_i^\text{L}$, one finds that in this scenario:
    \[
    Q(a=i,\bm{b}|\tilde{b}_i) = p_i \tau = Q(a=i,\bm{b}|\tilde{b}).
    \]
    Thus, in the scenario where both child probabilities exceed $q$, the single computation does not change the $Q$-value from its initial value.

    \item \textbf{Case 2:} $\hat{p}_i^\text{W} > q > \hat{p}_i^\text{L}$.
    
    In this scenario:
    \[
    V^\text{W} = \hat{p}_i^\text{W}(\tau-1), \quad V^\text{L} = q(\tau-1).
    \]
    Hence:
    \[
    Q(a=i,\bm{b}|\tilde{b}_i) = p_i[1 + \hat{p}_i^\text{W}(\tau-1)] + (1 - p_i)q(\tau-1).
    \]

    Since $q > \hat{p}_i^\text{L}$, replacing $\hat{p}_i^\text{L}(\tau-1)$ with $q(\tau-1)$ in the second term strictly increases the $Q$-value relative to Case 1. Therefore:
    \[
    Q(a=i,\bm{b}|\tilde{b}_i) > p_i \tau = Q(a=i,\bm{b}|\tilde{b}).
    \]

    \item \textbf{Case 3:} $q > \hat{p}_i^\text{W} > \hat{p}_i^\text{L}$.
    
    Here:
    \[
    V^\text{W} = q(\tau-1), \quad V^\text{L} = q(\tau-1).
    \]
    Thus:
    \[
    Q(a=i,\bm{b}|\tilde{b}_i) = p_i[1 + q(\tau-1)] + (1 - p_i)q(\tau-1) = p_i + q(\tau-1).
    \]

    This is even larger than the value in Case 2, since now both $V^\text{W}$ and $V^\text{L}$ use $q$, which by assumption is larger than $\hat{p}_i^\text{L}$. Hence:
    \[
    Q(a=i,\bm{b}|\tilde{b}_i) > Q(a=i,\bm{b}|\tilde{b}).
    \]
\end{enumerate}

In all cases, after performing a single computational expansion on arm $i$, the $Q$-value for arm $i$ is never reduced. In the worst case (Case 1), it stays the same, and in other cases, it strictly increases. Since the value function $V(\bm b|\tilde b)$ is defined as the maximum of the $Q$-values over all actions, and each computational action either maintains or increases the $Q$-value for the expanded sub-DAG, we have:
\[
V(\bm b|\tilde b) \leq V(\bm b|\tilde b_i).
\] By iterating this argument for additional computations and expansions, for any extended planning-belief $\tilde b' \supseteq \tilde b$, it follows that:
\[
V(\bm b|\tilde b) \leq V(\bm b|\tilde b'),
\]

\end{proof}

\subsubsection{Proof of Corollary \ref{Thm:Termination}}

\begin{proof}
From Thm. \ref{Thm:monotonicity}, the subjective $Q$-value for any action $a$ at a belief $\bm b$ is bounded above by the optimal $Q^*$-value when the entire reachable subgraph from $\bm b$ has been fully expanded. Formally:
\[
Q(i, \bm b | \tilde b) \leq Q^*(i, \bm b),
\]
for any $i$ and computational graph $\tilde b$.

Now, consider a state $(\bm b, \tilde b)$ where for some action $i$ and $\forall j\neq i$ such that:
\[
Q(i, \bm b | \tilde b) \geq Q^*(j, \bm b).
\]
For any computational graph $\tilde b^\prime$ that is a supergraph of $\tilde b$, i.e. $\tilde b^\prime \supseteq \tilde b$, it follows from Thm. \ref{Thm:monotonicity} that:
\[
\arg\max_a Q(a, \bm b | \tilde b^\prime) = i.
\]

From Thm. \ref{Thm:MC} and Corollary \ref{Thm:MMC}, the optimal meta-policy $\pi^*$ is a minimal mind-changer, which means it terminates computations as soon as the terminal action $a_\perp$ becomes invariant to further computations. In this case, since $a_\perp = i$ is invariant to any further expansion of $\tilde b$, the optimal policy $\pi^*$ will immediately terminate in $(\bm b, \tilde b)$.

Thus, $\pi^*$ prescribes termination in all states $(\bm b, \tilde b)$ where $\exists \, i$, such that $ Q(i, \bm b | \tilde b) \geq Q^*(j, \bm b)$, $\forall j\neq i$.
\end{proof}

\subsubsection{Proof of Corollary  \ref{Thm:M_state_computaion}}

\begin{proof}
Let $\bm b \in \mathcal{M}$, and let $\tilde b$ be any computational graph. By the definition of $\mathcal{M}$, for the belief $\bm b$, the subjective $Q$-value for the greedy arm $i$ satisfies:
\[
Q(i, \bm b | \tilde b) \geq Q^*(j, \bm b) \quad \forall j \neq i.
\]

The subjective value $V(\bm b | \tilde b)$ is defined as:
\[
V(\bm b | \tilde b) = \max_a Q(a, \bm b | \tilde b).
\]

For beliefs $\bm b \in \mathcal{M}$ and $\forall \tilde b$, the above implies:
\[
V(\bm b | \tilde b) = Q(a=i, \bm b | \tilde b).
\]

Since the subjective value $V(\bm b | \tilde b)$ depends solely on the $Q$-value of the greedy arm $i$, it is independent of the $Q$-values of the non-greedy arms ($a \neq i$). Therefore, further computation along any non-greedy arm leaves the subjective value $V(\bm b | \tilde b)$ invariant and never leads to mind-changing.  

As a result, the optimal meta-policy $\pi^*$ does not prescribe computation along non-greedy arms for any belief $\bm b \in \mathcal{M}$.

\end{proof}

\begin{lemma} \label{lem:ineq}
Let $\tilde b \subseteq \tilde b^\prime$ be two planning-belief subgraphs, and let $R_{\bm b^\prime}(\tilde b)$ and $R_{\bm b^\prime}(\tilde b^\prime)$ denote the reachable subgraphs from belief state $\bm b^\prime$. Then the following inequality holds:
\[
|\tilde b^\prime| - |\tilde b| \geq |R_{\bm b^\prime}(\tilde b^\prime)| - |R_{\bm b^\prime}(\tilde b)|.
\]
\end{lemma}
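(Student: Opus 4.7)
The plan is to prove the inequality via a set-theoretic inclusion argument on the edge sets. First, I would observe that reachability is monotone under subgraph extension: if an edge $e \in \tilde b$ lies on a path from $\bm b'$ within $\tilde b$, that very same path still exists in any supergraph $\tilde b^\prime \supseteq \tilde b$. Hence $R_{\bm b^\prime}(\tilde b) \subseteq R_{\bm b^\prime}(\tilde b^\prime)$, and combined with $\tilde b \subseteq \tilde b^\prime$ the claimed inequality rewrites as
\[
|R_{\bm b^\prime}(\tilde b^\prime) \setminus R_{\bm b^\prime}(\tilde b)| \;\leq\; |\tilde b^\prime \setminus \tilde b|.
\]

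The heart of the argument is therefore the set inclusion $R_{\bm b^\prime}(\tilde b^\prime) \setminus R_{\bm b^\prime}(\tilde b) \subseteq \tilde b^\prime \setminus \tilde b$. Pick an arbitrary edge $e$ in the left-hand set. Then $e \in \tilde b^\prime$ and there is a $\bm b^\prime$-rooted path to $e$ in $\tilde b^\prime$, yet $e \notin R_{\bm b^\prime}(\tilde b)$. I would then argue by contradiction: suppose $e \in \tilde b$. Because the belief-action structure depicted in Fig.~\ref{fig:decision_action_tree} is a rooted tree grown from $\bm b_0$ by node expansions, every edge has a unique path back to the root, and every planning-belief subgraph is a connected rooted subtree containing $\bm b_0$. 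Consequently, if $e \in \tilde b$ then the unique root-to-$e$ path lies in $\tilde b$; and since $\bm b^\prime$ is on that path (by the assumed reachability of $e$ from $\bm b^\prime$ in $\tilde b^\prime$), its tail from $\bm b^\prime$ to $e$ is likewise in $\tilde b$, forcing $e \in R_{\bm b^\prime}(\tilde b)$, a contradiction. Hence $e \notin \tilde b$, i.e.\ $e \in \tilde b^\prime \setminus \tilde b$, as required. The cardinality bound then gives the lemma.

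The main obstacle I anticipate is making the uniqueness-of-path step airtight, since on the abstract belief level the NABB graph is a DAG (distinct action-outcome histories can yield identical sufficient statistics $(\alpha_i,\beta_i)$). I would resolve this by being explicit that the planning-belief $\tilde b$ lives on the \emph{computational} tree of histories generated by node expansion, not on the quotient DAG of beliefs: each computational node has a single parent action, so root-to-edge paths are unique and the argument above applies verbatim. A small auxiliary observation worth stating is that a node-expansion computation adds a whole action-outcome ``bundle'' of edges at once, which guarantees that the planning-belief remains a closed rooted subtree at every step, justifying the ``connected rooted subtree containing $\bm b_0$'' property used above.
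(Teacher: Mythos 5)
Your proof takes essentially the same route as the paper's: both reduce the claim to the set inclusion $R_{\bm b^\prime}(\tilde b^\prime)\setminus R_{\bm b^\prime}(\tilde b)\subseteq \tilde b^\prime\setminus\tilde b$ and then pass to cardinalities. The only difference is that you explicitly justify that inclusion via the ancestor-closed rooted-subtree structure of planning-beliefs (a point that genuinely matters, since for arbitrary subgraph pairs an edge of $\tilde b$ could become reachable from $\bm b^\prime$ only after $\tilde b^\prime$ supplies a connecting path), whereas the paper asserts it in a single sentence; your version is correct and, if anything, more complete.
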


\begin{proof}
Since $\tilde b \subseteq \tilde b^\prime$, it follows that $R_{\bm b^\prime}(\tilde b) \subseteq R_{\bm b^\prime}(\tilde b^\prime)$. Consequently, the additional nodes and edges in $R_{\bm b^\prime}(\tilde b^\prime)$ relative to $R_{\bm b^\prime}(\tilde b)$ must also belong to $\tilde b^\prime \setminus \tilde b$. Formally:
\[
R_{\bm b^\prime}(\tilde b^\prime) \setminus R_{\bm b^\prime}(\tilde b) \subseteq \tilde b^\prime \setminus \tilde b.
\]

Taking the cardinalities of the sets yields:
\[
|R_{\bm b^\prime}(\tilde b^\prime) \setminus R_{\bm b^\prime}(\tilde b)| \leq |\tilde b^\prime \setminus \tilde b|.
\]

Rewriting the set differences in terms of their cardinalities, we have:
\[
|R_{\bm b^\prime}(\tilde b^\prime) \setminus R_{\bm b^\prime}(\tilde b)| = |R_{\bm b^\prime}(\tilde b^\prime)| - |R_{\bm b^\prime}(\tilde b)|,
\]
and
\[
|\tilde b^\prime \setminus \tilde b| = |\tilde b^\prime| - |\tilde b|.
\]

Substituting these expressions into the inequality gives:
\[
|R_{\bm b^\prime}(\tilde b^\prime)| - |R_{\bm b^\prime}(\tilde b)| \leq |\tilde b^\prime| - |\tilde b|.
\]
\end{proof}

\subsection{Comments on complexity} \label{app:complexity_concerns}
Performing backward induction on a DAG $G$ is linear in the number of edges. The size of the base graph (number of nodes) for a Bernoulli bandit task with $N$ arms and horizon $T$ is $O(T^{2N})$. Size of the meta-graph is exponential in the size of base graph, while being upper bounded by $O(2^{T^{2N}})$.  This means that the brute-force optimization for the meta-policy is exponential in $T,N$. On our computers, the brute-force solution  for $N=2$ can only be found for $T\leq2$. By making use of the pruning theorems from Sec. 5 (in the main text), we are able to find the optimal solutions for $N=2$ and  $T\leq 6$. 
    
Given that one necessarily needs to solve the problem approximately, we are unable to provide comparisons of our proposed solution (at larger values of $T$) with the true optimal solution. However, we do test the robustness of the approximate solution within some range of increasingly weaker approximations, as mentioned in  \ref{app:robust_approximations}. The presented analysis is only possible because our monotonicity arguments help us prune the meta-graph (which provides an exponential improvement over the brute-force approach, as we avoid computing in $\mathcal{M}$ states, the size of the (pruned) meta-graph with our approach is upper bounded by $O(2^{T^{2N}/c})$ for some constant $c$ proportional to $|\mathcal{M}|/|\mathcal{S}|$).

\subsection{Relation to meta-MDP formulations}\label{app:relation_to_MMDP}
The presented study differs from existing formulations of metareasoning \citep{lin2015,callaway2022rational,hay2014selecting} in the following crucial ways.
\begin{enumerate}
    \item The meta-BAMDP involves two distinct belief spaces - capturing epistemic uncertainty about the environment $\mathcal{B}$ and the \textit{computational uncertainty} arising out of approximate planning $\tilde {\mathcal{B}}$. As the traditional metareasoning frameworks consider complete knowledge of the environment, they do not make any distinctions between the two kinds of uncertainties. Such a distinction between these two forms of uncertainties has been made previously in philosophical literature (for instance see \citep{hacking1967slightly}). 
    \item Another issue, which has been identified previously is that metareasoning has only been adapted for single-shot decision-making (the single shot might even be a policy). I.e. that all metareasoning happens prior to the execution of behavior. The meta-BAMDP framework allows us to study scenarios where thinking and acting are \textit{interleaved}.
    \item A further distinction from previous efforts has been, that in our formulation of the meta-BAMDP, we do not apriori know the distribution class of computational outcomes. For instance, in the "metalevel Bernoulli model" \citep{hay2014selecting}, a computational action involves sampling from a Bernoulli probability distribution. On the other hand, in the presented meta-BAMDP, a computation is graph expansion and we do not make any assumptions on how the subjective value increase per computation is distributed.
\end{enumerate}

\subsection{Dependence on the number of arms} \label{app:number_of_arms}
\begin{figure}
\centering
\subfloat[]{\includegraphics[width = 1.6in]{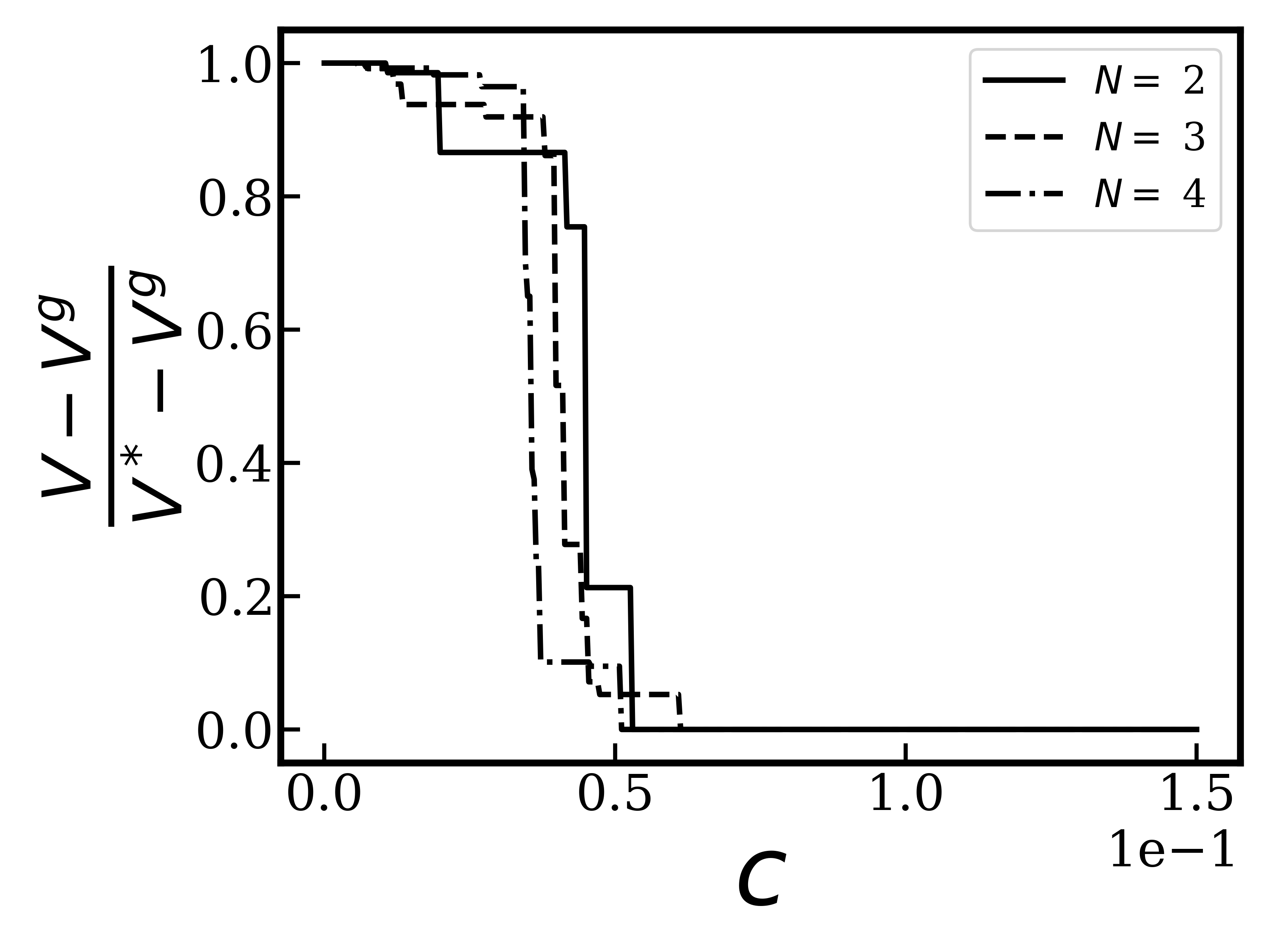}} 
\subfloat[]{\includegraphics[width = 1.6in]{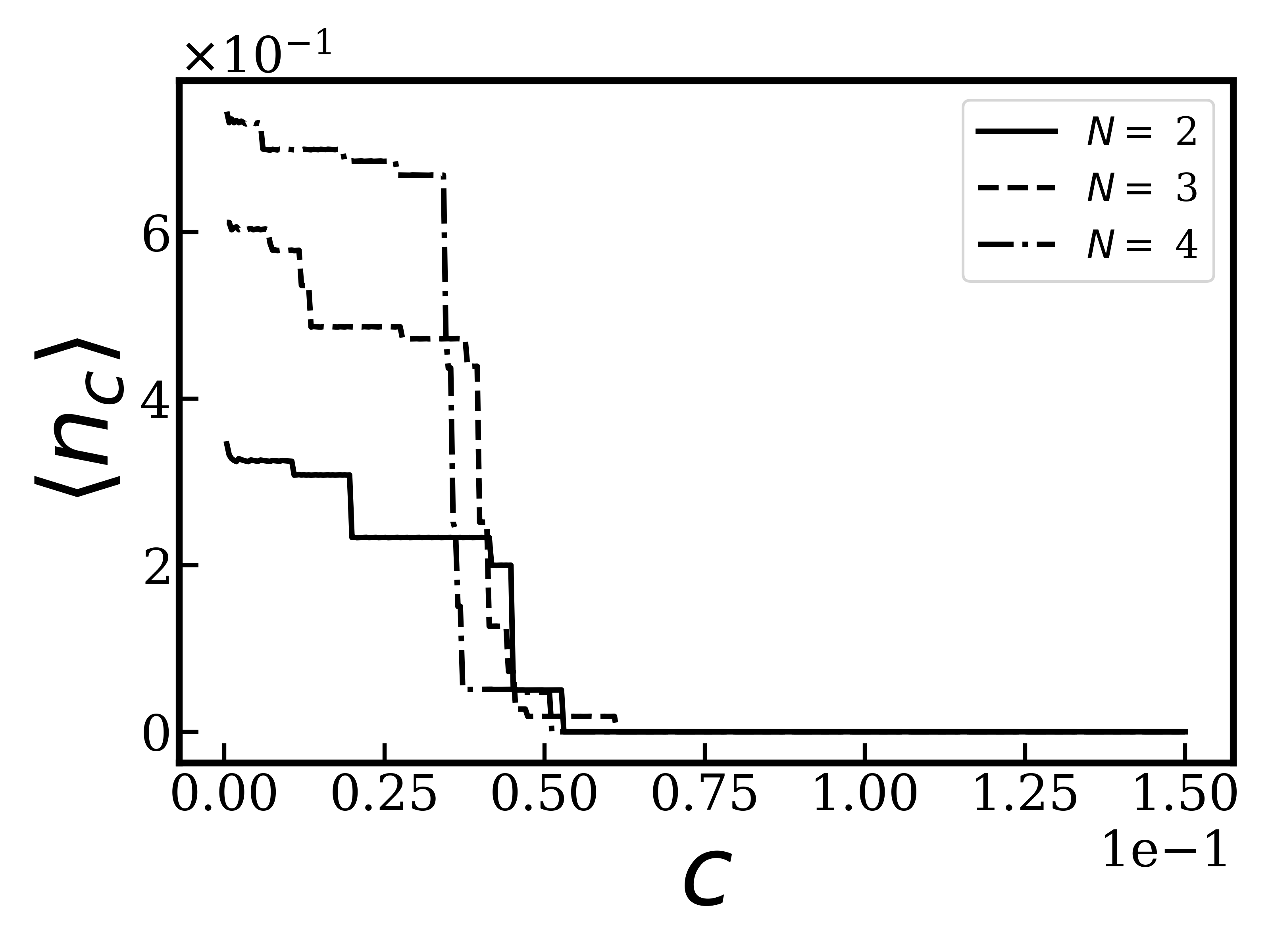}}
\subfloat[]{\includegraphics[width = 1.6in]{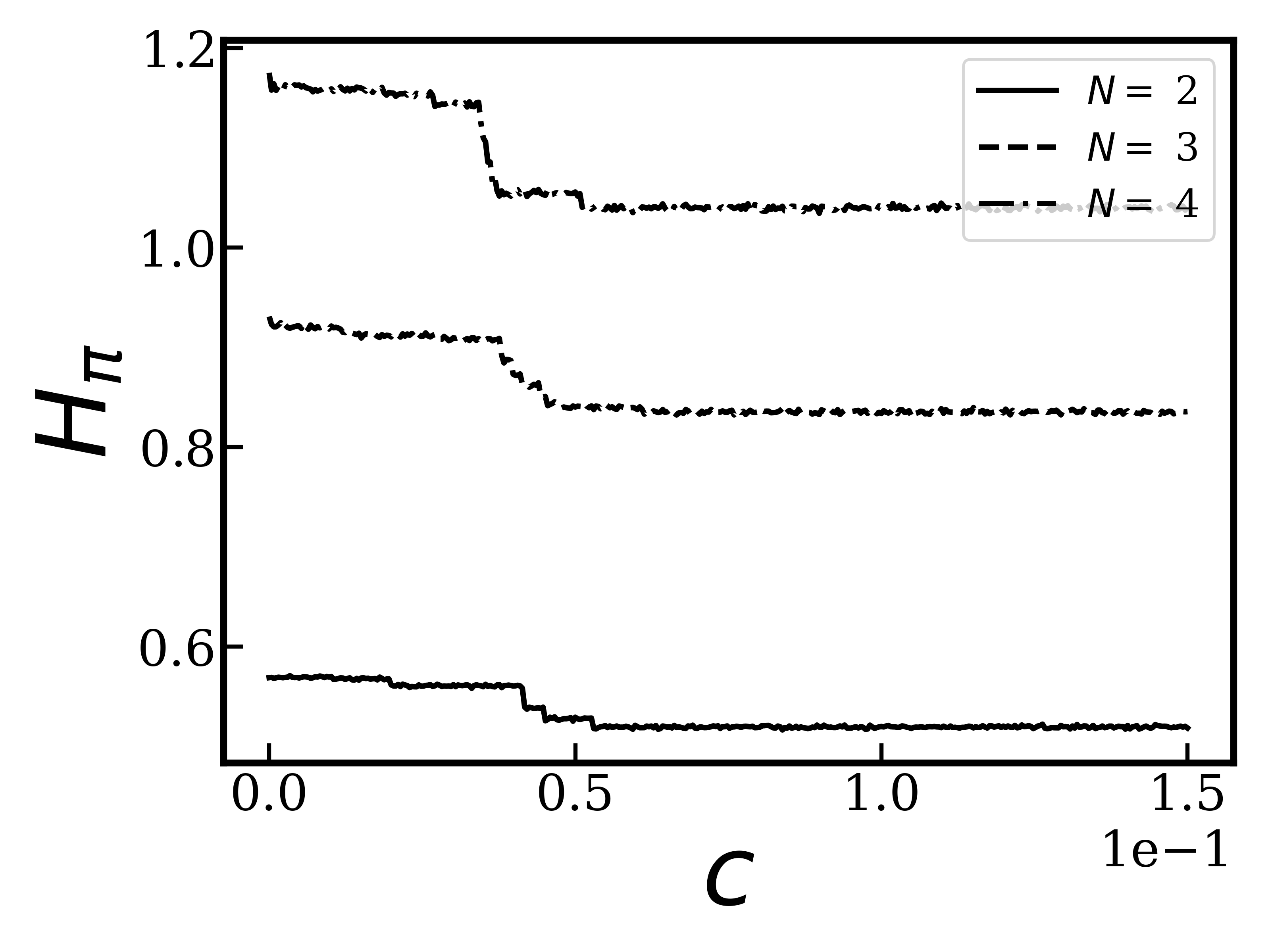}}
\subfloat[]{\includegraphics[width = 1.6in]{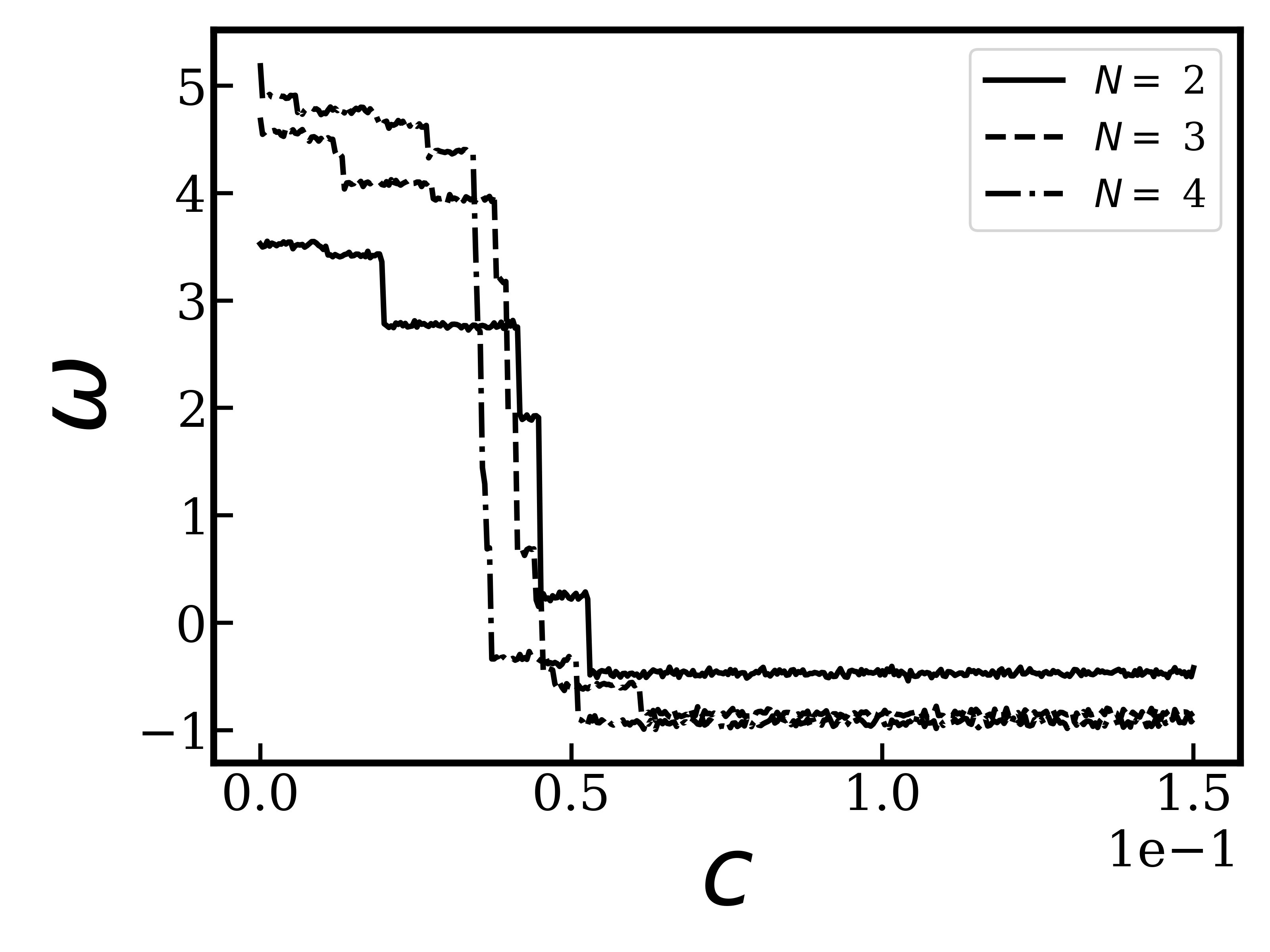}}
\caption{(a) Normalized reward gained as a function of computational cost for varying number of arms in tasks of length $T=9$. (b) Average number of computations performed as a function of computational costs for varying number of arms in tasks of length $T=9$. (c) Action entropy as a function computational costs with varying number of arms in tasks of length $T=9$ (averaged over $10^5$ simulation runs). (d) Best fit $\omega$ as a function of computational costs to behavior generated by meta-optimal policies in a symmetric environment with $p=0.5$ and $T=9$ (averaged over $10^5$ simulation runs).}
\label{fig:arms_vary}
\end{figure}
To explore the effects of varying number of arms on human behavior we first look at the normalized reward accrued (Fig. \ref{fig:arms_vary}(a)), which shows qualitatively the same behavior as for $N=2$ arms. However, we see that for higher number of arms the "drop" in the normalized reward happens sooner and is also sharper, suggesting that the agent is more sensitive to variations in computational costs with more number of arms. A similar trend can also be observed when we look at the average number of computations $\langle n_c\rangle$ (Fig. \ref{fig:arms_vary}(b)) as a function of computational costs. Additionally, we also note that agents compute more when the number of arms is high.

We also note that the qualitative features of human behavior adaptation to computational constraints - decreasing action entropy and directed exploration - are maintained in tasks with more number of arms. In Fig. \ref{fig:arms_vary}(c) we note that the action entropy reduces with computational costs, but increases with the number of arms. Additionally we note that the absolute drop in action entropy as a function of computational cost is higher in tasks with more arms. In Fig. \ref{fig:arms_vary}(d) we find that directed exploration decreases with increasing computational costs. Further we also note that for higher number of arms, the drop in directed exploratory behavior happens at smaller $c$ as compared to tasks with smaller number of arms. This is in line with experimental observations of human exploratory behavior adapting to environment size \citep{BROWN2022}.

\end{document}